\pgfplotsset{compat=1.12}
\definecolor{DarkGreen}{rgb}{0.1,0.5,0.1}
\definecolor{DarkRed}{rgb}{0.5,0.1,0.1}
\definecolor{DarkBlue}{rgb}{0.1,0.1,0.5}
\definecolor{Gray}{rgb}{0.2,0.2,0.2}
\definecolor{c1}{RGB}{38, 70, 83}
\definecolor{c2}{RGB}{42, 157, 143}
\definecolor{c3}{RGB}{233, 196, 106}
\definecolor{c5}{RGB}{231, 111, 81}
\definecolor{c4}{RGB}{244, 162, 97}
\definecolor{c1}{RGB}{38, 70, 83}
\definecolor{c2}{RGB}{42, 157, 143}
\definecolor{c3}{RGB}{233, 196, 106}
\definecolor{c5}{RGB}{231, 111, 81}
\definecolor{c4}{RGB}{244, 162, 97}
\newcommand\blfootnote[1]{%
  \begingroup
  \renewcommand\thefootnote{}\footnote{#1}%
  \addtocounter{footnote}{-1}%
  \endgroup
}
\lstdefinestyle{mystyle}{
    commentstyle=\color{DarkBlue},
    keywordstyle=\color{DarkRed},
    numberstyle=\tiny\color{Gray},
    stringstyle=\color{DarkGreen},
    basicstyle=\footnotesize,
    breakatwhitespace=false,         
    breaklines=true,                 
    captionpos=b,                    
    keepspaces=true,                 
    numbers=left,                    
    numbersep=5pt,                  
    showspaces=false,                
    showstringspaces=false,
    showtabs=false,                  
    tabsize=2
}
\def\draft{1}
\def\submit{0}
    \def\ShowAuthNotes{1}
    \def\ShowAuthNotes{0}
\newcommand{\forsubmit}[1]{#1}
\newcommand{\forreals}[1]{}
\newcommand{\forreals}[1]{#1}
\newcommand{\forsubmit}[1]{}
\newcommand{\authnote}[2]{{ \footnotesize \bf{\color{DarkRed}[#1's Note:
{\color{DarkBlue}#2}]}}}
\newcommand{\authnote}[2]{}
\newtheorem{theorem}{Theorem}[section]
\newtheorem{lemma}[theorem]{Lemma}
\newtheorem{corollary}[theorem]{Corollary}
\newtheorem{proposition}[theorem]{Proposition}
\newtheorem{fact}[theorem]{Fact}
\theoremstyle{definition}
\newtheorem{definition}[theorem]{Definition}
\newtheorem{example}[theorem]{Example}
\newtheoremstyle{example_contd}
{\topsep} {\topsep}%
{}
{}
{\bfseries}
{.}
{1em}
{\thmname{#1} \thmnumber{ #2}\thmnote{#3} (continued)}
\theoremstyle{example_contd}
\newcommand{\chapterref}[1]{\hyperref[ch:#1]{Chapter~\ref{ch:#1}}}
\newcommand{\claimref}[1]{\hyperref[claim:#1]{Claim~\ref{claim:#1}}}
\newcommand{\corollaryref}[1]{\hyperref[cor:#1]{Corollary~\ref{cor:#1}}}
\newcommand{\definitionref}[1]{\hyperref[def:#1]{Definition~\ref{def:#1}}}
\newcommand{\equationref}[1]{\hyperref[eq:#1]{Equation~\ref{eq:#1}}}
\newcommand{\factref}[1]{\hyperref[fact:#1]{Fact~\ref{fact:#1}}}
\newcommand{\figureref}[1]{\hyperref[fig:#1]{Figure~\ref{fig:#1}}}
\newcommand{\tableref}[1]{\hyperref[tab:#1]{Table~\ref{tab:#1}}}
\newcommand{\itemref}[1]{\hyperref[item:#1]{Item~(\ref{item:#1})}}
\newcommand{\lemmaref}[1]{\hyperref[lem:#1]{Lemma~\ref{lem:#1}}}
\newcommand{\propref}[1]{\hyperref[prop:#1]{Proposition~\ref{prop:#1}}}
\newcommand{\propositionref}[1]{\hyperref[prop:#1]{Proposition~\ref{prop:#1}}}
\newcommand{\remarkref}[1]{\hyperref[rem:#1]{Remark~\ref{rem:#1}}}
\newcommand{\sectionref}[1]{\hyperref[sec:#1]{Section~\ref{sec:#1}}}
\newcommand{\theoremref}[1]{\hyperref[thm:#1]{Theorem~\ref{thm:#1}}}
\newcommand{\Esymb}{\mathbb{E}}
\newcommand{\Psymb}{\mathbb{P}}
\DeclareMathOperator*{\E}{\Esymb}
\DeclareMathOperator*{\ProbOp}{\Psymb r}
\renewcommand{\Pr}{\ProbOp}
\newcommand{\Prob}[1]{\Pr\left\{ #1 \right\}}
\newcommand{\cN}{{\cal N}}
\newcommand{\defeq}{\stackrel{\small \mathrm{def}}{=}}
\renewcommand{\leq}{\leqslant}
\renewcommand{\geq}{\geqslant}
\newcommand{\norm}[1]{\lVert#1\rVert_2}
\newcommand{\R}{\mathbb{R}}
\renewcommand{\D}{\mathcal D}
\newcommand{\N}{\mathbb N}
\newcommand{\ignore}[1]{}
\DeclareMathOperator*{\argmin}{arg\,min}
\DeclareMathOperator*{\argmax}{arg\,max}
\renewcommand{\epsilon}{\varepsilon}
\newcommand{\eps}{\epsilon}
\newcommand{\remove}[1]{}
\newenvironment{itm}
{\begin{itemize}[noitemsep,topsep=0pt,parsep=0pt,partopsep=0pt]}
{\end{itemize}}
\newenvironment{enum}
{\begin{enumerate}[noitemsep,topsep=0pt,parsep=0pt,partopsep=0pt]}
{\end{enumerate}}
\newcommand{\ploss}[2]{\E_{z \sim \D(#1)}\ell(z; #2)}
\newcommand{\PR}{\mathrm{PR}}
\newcommand{\thetaPS}{{\theta_{\mathrm{PS}}}}
\newcommand{\varlipc}{L}
\newcommand{\bigexp}[1]{\E \big[ #1 \big]}
\title{Stochastic Optimization for Performative Prediction} 
\author{Celestine Mendler-D\"unner*~~~Juan C. Perdomo*~~~Tijana Zrnic* ~~~Moritz Hardt$^\dagger$\\
{\small \{mendler, jcperdomo, tijana.zrnic, hardt\}@berkeley.edu}
\\ \\University of California, Berkeley\vspace{1cm}}
\date{}
\begin{document}
\maketitle
\vspace{-25pt}
\begin{abstract}
In performative prediction, the choice of a model influences the distribution of future data, typically through actions taken based on the model's predictions.

We initiate the study of stochastic optimization for performative prediction. What sets this setting apart from traditional stochastic optimization is the difference between merely updating model parameters and deploying the new model. The latter triggers a shift in the distribution that affects future data, while the former keeps the distribution as is.

Assuming smoothness and strong convexity, we prove rates of
convergence for both greedily deploying models after each stochastic update
(greedy deploy) as well as for taking several updates before redeploying (lazy
deploy). In both cases, our bounds smoothly recover the optimal $O(1/k)$ rate as the strength of performativity decreases. Furthermore, they illustrate how depending on the strength of performative effects, there exists a regime where either approach outperforms the other. We experimentally explore the trade-off on both synthetic data and a strategic classification simulator.
\end{abstract}
\vspace{-10pt}
\section{Introduction}
\blfootnote{\noindent* Equal contribution. $^\dagger$ MH is a paid consultant for Twitter.}
\vspace{-15pt}

Prediction in the social world is often \emph{performative} in that a prediction triggers actions that influence the outcome. A forecast about the spread of a disease, for example, can lead to drastic public health action aimed at deterring the spread of the disease. In hindsight, the forecast might then appear to have been off, but this may largely be due to the actions taken based on it. Performativity arises naturally in consequential statistical decision-making problems in domains ranging from financial markets to online advertising.

Recent work \cite{perdomo2020performative} introduced and formalized \emph{performative prediction}, an extension of the classical supervised learning setup whereby the choice of a model can change the data-generating distribution. This perspective leads to an important notion of stability requiring that a model is optimal on the distribution it entails. Stability prevents a certain cat-and-mouse game in which the learner repeatedly updates a model, because it no longer is accurate on the observed data. Prior work established conditions under which stability can be achieved through repeated risk minimization on the full data-generating distribution.

When samples arrive one-by-one over time, however, the learner faces a new challenge compared with traditional stochastic optimization. With every new sample that arrives, the learner has to decide whether to deploy the model, thereby triggering a drift in distribution, or to continue to collect more samples from the same distribution. Never deploying a model avoids distribution shift, but forgoes the possibility of converging to a stable point. Deploying the model too greedily could lead to overwhelming distribution shift that hampers convergence. In fact, it is not even clear that fast convergence to stability is possible at all in an online stochastic setting.

\subsection{Our contributions}

In this work, we initiate the study of stochastic optimization for performative prediction. Our main results are the first convergence guarantees for the stochastic gradient method in performative settings. Previous finite-sample guarantees \cite{perdomo2020performative} had an exponential dependence on the dimension of the data distribution.

We distinguish between two natural variants of the stochastic gradient method. One variant, called \emph{greedy deploy}, updates model parameters and deploys the model at every step, after seeing a single example. The other, called \emph{lazy deploy}, updates model parameters on multiple samples before deploying a model. We show that both methods converge to a stable solution. However, which one is preferable depends both on the cost of model deployment and the strength of performativity. 

To state our results more precisely we recall the formal setup of performative prediction.  In performative prediction, we assume that after deploying a model parameterized by~$\theta$, data are drawn from the distribution $\D(\theta)$. The distribution map $\D(\cdot)$ maps model parameters to data-generating distributions.

Given a loss function~$\ell(z;\theta),$ a \emph{peformatively stable} model $\theta$ satisfies the fixed-point condition,
\[
\theta \in \arg\min_{\theta'} \E_{z\sim \D(\theta)} \ell(z; \theta')\,.
\]
Performative stability expresses the desideratum that the model~$\theta$ minimizes loss on the distribution~$\D(\theta)$ that it entails. Once we found a performatively stable model, we therefore have no reason to deviate from it based on the data that we observe.

The stochastic gradient method in this setting operates in a sequence of rounds. In each round~$k$, the algorithm starts from a model~$\theta_{k}$ and can choose to perform $n(k)$ stochastic gradient updates where each data point is drawn i.i.d. from the distribution~$\D(\theta_{k})$. After $n(k)$ stochastic gradient updates, the algorithm deploys the new model parameters~$\theta_{k+1}$. Henceforth, the data-generating distribution is $\D(\theta_{k+1})$ and the algorithm proceeds to the next round. For greedy deploy, $n(k)=1$ for all $k$, whereas for lazy deploy $n(k)$ is a hyperparameter we can choose freely.

To analyze the stochastic gradient method, we import the same assumptions that were used in prior work on performative prediction. Apart from smoothness and strong convexity of the loss function, the main assumption is that the distribution map is sufficiently Lipschitz. This means that a small change to the model parameters (in Euclidean distance) leads to small change in the data-generating distribution (as measured in the Wasserstein metric). 

Our first main result shows that under these assumptions, greedy deploy achieves the same convergence rate as the stochastic gradient method in the absence of performativity.

\begin{theorem}[Greedy deploy, informal]
If the loss is smooth and strongly convex and the distribution map is sufficiently Lipschitz, greedy deploy converges to performative stability at rate $O(1/k)$, where $k$ is the number of model deployment steps.
\end{theorem}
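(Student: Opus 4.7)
The plan is to adapt the standard contraction analysis of strongly convex SGD, treating the performative drift as a controllable perturbation of the deterministic gradient. Let $\beta$ denote the smoothness parameter of $\ell(z;\cdot)$, $\gamma$ the strong convexity parameter, and $\epsilon$ the Lipschitz constant of the distribution map $\D$. Let $\thetaPS$ be the performatively stable point, whose existence under these assumptions is guaranteed by prior work \cite{perdomo2020performative}. The potential I would track is $\Phi_k \defeq \E\|\theta_k - \thetaPS\|^2$, applied to the iteration $\theta_{k+1} = \theta_k - \eta_k \nabla_\theta \ell(z_k; \theta_k)$ with $z_k \sim \D(\theta_k)$.

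Expanding the squared distance and conditioning on $\theta_k$ gives
\[
\E\Brac{\|\theta_{k+1} - \thetaPS\|^2 \mid \theta_k} = \|\theta_k - \thetaPS\|^2 - 2\eta_k \iprod{g_k, \theta_k - \thetaPS} + \eta_k^2 \, \E_{z \sim \D(\theta_k)} \|\nabla_\theta \ell(z; \theta_k)\|^2,
\]
where $g_k \defeq \E_{z \sim \D(\theta_k)} \nabla_\theta \ell(z; \theta_k)$. The essential step is to decompose $g_k$ by adding and subtracting the counterfactual gradient evaluated on the stable distribution:
\[
g_k = \E_{z \sim \D(\thetaPS)} \nabla_\theta \ell(z; \theta_k) + \Paren{\E_{z \sim \D(\theta_k)} \nabla_\theta \ell(z; \theta_k) - \E_{z \sim \D(\thetaPS)} \nabla_\theta \ell(z; \theta_k)}.
\]
Since $\thetaPS$ minimizes the risk under $\D(\thetaPS)$ and the loss is $\gamma$-strongly convex, the first summand contributes at least $\gamma \|\theta_k - \thetaPS\|^2$ to the inner product with $\theta_k - \thetaPS$. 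The second summand has norm at most $\beta \epsilon \|\theta_k - \thetaPS\|$: it is the expectation of the $\beta$-Lipschitz map $z \mapsto \nabla_\theta \ell(z;\theta_k)$ under two distributions whose Wasserstein-1 distance is $\epsilon \|\theta_k - \thetaPS\|$, and the difference is controlled by Kantorovich-Rubinstein duality. Together these yield $\iprod{g_k, \theta_k - \thetaPS} \geq (\gamma - \beta\epsilon) \|\theta_k - \thetaPS\|^2$.

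For the second-moment term I would use smoothness together with first-order optimality of $\thetaPS$ on $\D(\thetaPS)$ to split $\E_{z \sim \D(\theta_k)} \|\nabla_\theta \ell(z; \theta_k)\|^2 \leq 2\sigma^2 + 2\beta^2 \|\theta_k - \thetaPS\|^2$, where $\sigma^2$ is a variance proxy at the stable point. Substituting into the recursion produces
\[
\Phi_{k+1} \leq \paren{1 - 2\eta_k (\gamma - \beta \epsilon) + 2\eta_k^2 \beta^2} \Phi_k + 2\eta_k^2 \sigma^2,
\]
which matches the classical strongly convex SGD recursion with effective strong convexity parameter $\tgamma \defeq \gamma - \beta\epsilon$. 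The sufficient Lipschitzness assumption $\epsilon < \gamma/\beta$ ensures $\tgamma > 0$; then the schedule $\eta_k = \Theta(1/(\tgamma k))$ yields $\Phi_k = O(1/k)$ by a standard induction.

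I expect the main obstacle to be the performative drift term: unlike stochastic gradient noise, it is aligned with $\theta_k - \thetaPS$, so it enters the contraction coefficient multiplicatively rather than additively into the variance. This is what forces the condition $\epsilon < \gamma/\beta$ and what degrades the effective rate from $\gamma$ to $\tgamma$. A secondary technical issue is certifying that the assumed bound on the gradient second moment holds along the whole trajectory; I would handle this by assuming bounded gradient variance at $\thetaPS$ and propagating that bound to iterates $\theta_k$ using smoothness and the Lipschitz property of $\D$.
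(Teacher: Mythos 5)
Your proposal is correct and follows essentially the same route as the paper's proof of Lemma \ref{lemma:online-sgd-recursion} and Theorem \ref{thm:online-main}: the same potential $\E\|\theta_k-\thetaPS\|_2^2$, the same add-and-subtract of the counterfactual gradient under $\D(\thetaPS)$ with strong convexity plus first-order optimality handling the parameter part and Kantorovich--Rubinstein duality with joint smoothness handling the distribution-shift part, yielding the effective strong convexity $\gamma-\epsilon\beta$ and the standard $O(1/k)$ induction. The only cosmetic difference is the second-moment term, which the paper controls via the assumed bound \ref{ass:a3} relative to $G(\theta_k)$ (then relating $G(\theta_k)$ to $\thetaPS$ via the $\epsilon\beta/\gamma$-Lipschitzness of $G$) rather than deriving it from smoothness and a variance proxy at $\thetaPS$ as you sketch.
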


Generally speaking, the Lipschitz parameter has to be smaller than the inverse condition number of the loss function for our bound to guarantee convergence. The exact rate stated in Theorem~\ref{thm:online-main} further improves as the Lipschitz constant tends to~$0$.

In many realistic scenarios, data are plentiful, but deploying a model in a large production environment is costly. In such a scenario, it makes sense to aim to minimize the number of model deployment steps by updating the model parameters on multiple data points before initiating another model deployment. This is precisely what lazy deploy accomplishes as our next result shows.

\begin{theorem}[Lazy deploy, informal]
Under the same assumptions as above, for any $\alpha>0,$ lazy deploy converges to performative stability at rate $O(1/k^{\alpha})$ provided that $O(k^{1.1\alpha})$ samples are collected between deployments $k$ and $k+1$.
\end{theorem}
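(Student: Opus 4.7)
The strategy is to treat each round of lazy deploy as an ordinary SGD run on the \emph{fixed} distribution $\D(\theta_k)$ and then couple rounds through the repeated-risk-minimization map $G(\theta) \defeq \argmin_{\theta'} \E_{z\sim\D(\theta)} \ell(z;\theta')$. Since the performatively stable point $\thetaPS$ is the unique fixed point of $G$, controlling $\|\theta_{k+1}-\thetaPS\|$ reduces to (i) bounding the inner-loop SGD error $\|\theta_{k+1}-G(\theta_k)\|$ and (ii) using contractivity of $G$ to absorb $\|G(\theta_k)-\thetaPS\|$ into the previous iterate's deviation.

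\textbf{Step 1 (contraction of $G$).} Under the stated smoothness ($\gamma$), strong convexity ($\mu$), and Wasserstein-Lipschitz ($\epsilon$) assumptions with $\epsilon\gamma/\mu<1$, prior performative-prediction work shows that $G$ is a $\delta$-contraction in Euclidean norm with $\delta=\epsilon\gamma/\mu<1$; in particular $\thetaPS$ exists, is unique, and $\|G(\theta)-\thetaPS\|\le\delta\|\theta-\thetaPS\|$. \textbf{Step 2 (inner-loop SGD).} Conditional on $\theta_k$, round $k$ is SGD on the smooth, $\mu$-strongly-convex objective $F_k(\theta)=\E_{z\sim\D(\theta_k)}\ell(z;\theta)$ whose minimizer is $G(\theta_k)$. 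Standard analysis for strongly convex SGD with an appropriate step-size schedule yields
\[
\E\bigl[\|\theta_{k+1}-G(\theta_k)\|^2 \mid \theta_k\bigr] \;\le\; \frac{A\,\|\theta_k-G(\theta_k)\|^2 + B}{n(k)}\mcom
\]
where $A,B$ depend only on $\gamma,\mu,$ and the gradient-noise variance. The ``initial gap'' $\|\theta_k-G(\theta_k)\|$ can then be controlled by $\|\theta_k-\thetaPS\|+\|\thetaPS-G(\theta_k)\|\le(1+\delta)\|\theta_k-\thetaPS\|$, feeding back into the quantity we are tracking.

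\textbf{Step 3 (one-step recursion).} Apply the triangle inequality $\|\theta_{k+1}-\thetaPS\| \le \|\theta_{k+1}-G(\theta_k)\| + \delta\|\theta_k-\thetaPS\|$, square it using the elementary bound $(a+b)^2\le(1+\rho)a^2+(1+\rho^{-1})b^2$ with $\rho$ chosen so that $(1+\rho^{-1})\delta^2<1$, and take total expectation. Substituting the bound from Step 2 gives a linear recursion in the squared deviation:
\[
u_{k+1} \;\le\; \tilde\delta\, u_k + \frac{C}{n(k)}\mcom \qquad u_k\defeq\E\|\theta_k-\thetaPS\|^2\mcom\qquad \tilde\delta\in(0,1).
\]
\textbf{Step 4 (solve the recursion).} Unrolling gives $u_k \le \tilde\delta^{\,k} u_0 + C\sum_{j<k}\tilde\delta^{\,k-1-j}/n(j)$. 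Because the geometric kernel is summable, the sum is dominated by indices $j$ near $k$, so with $n(j)\asymp j^{1.1\alpha}$ we obtain $u_k = O(1/k^{1.1\alpha}) \subseteq O(1/k^\alpha)$, which is the stated rate (the $1.1$ exponent is precisely the slack that lets the geometric tail be absorbed into the polynomial one).

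\textbf{Main obstacle.} The subtle point is that the inner SGD run at round $k$ starts from the \emph{random} iterate $\theta_k$, and its natural ``target'' is $G(\theta_k)$ rather than $\thetaPS$; the initial-gap term $\|\theta_k-G(\theta_k)\|^2$ in Step~2 is history-dependent and must be translated back into $u_k$ without losing the contraction. Sequencing the conditional expectations correctly, and choosing an SGD step-size schedule inside each round that does not require knowledge of the global iteration count $k$ (e.g.\ a per-round $1/\mu t$ schedule, possibly with a short burn-in), are the main technical wrinkles; none of them are deep, but they determine the explicit constants $A,B,C$ and the precise exponent of $k$ tolerated in $n(k)$.
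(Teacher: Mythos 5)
Your proposal is correct and follows the same architecture as the paper's proof of Theorem~\ref{thm:offline-main}: a classical $O(1/n(k))$ SGD bound for the inner loop toward $G(\theta_k)$ (Lemma~\ref{lemma:offline-sgd-local-conv}), the contraction $\|G(\theta)-\thetaPS\|_2\le \epsilon\frac{\beta}{\gamma}\|\theta-\thetaPS\|_2$ (Lemma~\ref{lemma:rrm}; note your $\gamma,\mu$ are the paper's $\beta,\gamma$), a one-step recursion $u_{k+1}\le \tilde\delta\, u_k + C/n(k)$, and an unrolling that convolves a geometric kernel with $1/n(j)$ (Lemma~\ref{lemma:order-K}). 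The one place you genuinely diverge is the cross term: the paper expands $\E\|\theta_{k+1}-\thetaPS\|_2^2$ fully and controls $2\E\left[\|\theta_{k+1}-G(\theta_k)\|_2\|G(\theta_k)-\thetaPS\|_2\right]$ via Jensen plus an AM--GM split with a free exponent $\alpha_0\in(0,1)$, and it is exactly this step that produces the $k^{-\alpha(1-o(1))}$ degradation in Theorem~\ref{thm:offline-main} and hence the need for the $1.1$ slack in $n(k)$. Your Young-type bound $(a+b)^2\le(1+\rho)a^2+(1+\rho^{-1})b^2$ with $(1+\rho^{-1})(\epsilon\beta/\gamma)^2<1$ sidesteps the cross term entirely and leaves a clean additive $C/n(k)$, so your recursion would in fact yield $O(1/k^{1.1\alpha})$; your attribution of the $1.1$ exponent to ``absorbing the geometric tail'' is the only slightly off remark, since $\tilde\delta^{\Omega(k)}$ decays superpolynomially regardless --- in the paper the slack compensates for the AM--GM exponent $\alpha_0<1$, an artifact your route avoids. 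The remaining care you flag (conditioning on $\theta_k$, translating $\|\theta_k-G(\theta_k)\|_2\le(1+\epsilon\frac{\beta}{\gamma})\|\theta_k-\thetaPS\|_2$, and taking $n_0$ large enough that $\tilde\delta<1$) matches the paper's treatment.
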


In particular, this shows that any distance from optimality~$\delta>0$ can be achieved with $(1/\delta)^c$ model deployments for an arbitrarily small  $c>0$ at the cost of collecting polynomial in~$1/\delta$ many samples.

Our main theorems provide upper bounds on the convergence rate of each method. As such they can only draw an incomplete picture about the relative performance of these methods. Our empirical investigation therefore aims to shed further light on their relative merits. In particular, our experiments show that greedy deploy generally performs better than lazy deploy when the distribution map has a small Lipschitz constant, i.e., the performative effects are small. Conversely, lazy deploy fares better when the distribution map is less Lipschitz. These observations are consistent with what our theoretical upper bounds suggest.

\subsection{Related work}

Perdomo et al.~\cite{perdomo2020performative} introduced the performative prediction framework and analyzed algorithms for finding stable points that operate at the population level. While they also analyze some finite-sample extensions of these procedures, their analysis relies on concentration of the empirical distribution to the true distribution in the Wasserstein metric, and hence requires exponential sample complexity. In contrast, our analysis ensures convergence even if the learner collects a single sample at every step.

There has been a long line of work \cite{bartlett_concepts, bartlett_slowlychanging, kuh1991learning, barve1997complexity, besbes2015non} within the learning theory community studying concept drift and learning from drifting distributions. Our results differ from these previous works since in performative prediction, changes in distribution are not a passive feature of the environment, but rather an active consequence of model deployment. This introduces several new considerations, such as the conceptual idea of performative stability, which is the main focus of our investigation.

Our work draws upon ideas from the stochastic convex optimization literature \cite{robbins1951stochastic, rakhlinSGDoptimal, shaiSCO, moulines2011non, pesky, bottou2018optimization}. Relative to these previous studies, our work analyzes the behavior of the stochastic gradient method in performative settings, where the underlying objective changes as a response to model deployment.

Lastly, we can view instances of performative prediction as special cases of reinforcement learning problems with nice structure, such as a Lipschitz mapping from policy parameters to the induced distribution over trajectories (see \cite{perdomo2020performative} for further discussion). The variants of the stochastic gradient method we consider can be viewed as policy gradient-like algorithms ~\cite{kakade2002natural, alekhpg, williams1992simple, sutton2000policy} for this setting.

\section{Preliminaries}

We start by reviewing the core concepts of the framework of performative prediction. Afterwards, we set the stage for our analysis of stochastic algorithms by first considering gradient descent at the population level. In doing so, we highlight some of the fundamental limitations of gradient descent in performative settings.

\subsection{The framework of performative prediction}

Throughout our presentation, we focus on predictive models $f_\theta$ that are parametrized by a  vector $\theta \in \Theta \subseteq \R^d$, where the parameter space $\Theta$ is a closed, convex set. The model or classifier, $f_\theta$, maps instances $z\in\R^m$ to predictions $f_\theta(z)$. Typically, we think of $z$ as being a feature, label pair $(x,y)$. We assess the quality of a classifier $f_\theta$ via a loss function $\ell(z;\theta)$.
 
The key theme in performative prediction is that the choice of deployed model $f_\theta$ influences the future data distribution and hence the expected loss of the classifier $f_\theta$. This behavior is formalized via the notion of a \emph{distribution map} $\D(\cdot)$, which is the key conceptual device of the framework. For every $\theta\in\Theta$, $\D(\theta)$ denotes the distribution over instances $z$ induced by the deployment of $f_\theta$. In this paper, we consider the setting where at each step, the learner observes a single sample $z \sim \D(\theta)$, where $f_\theta$ is the most recently deployed classifier. After having observed this sample, the learner chooses whether to deploy a new model or to leave the distribution as is before collecting the next sample.

We adopt the following Lipschitzness assumption on the distribution map. It captures the idea that if two models make similar predictions, then they also induce similar distributions. 

\begin{definition}[$\epsilon$-sensitivity \cite{perdomo2020performative}]
\label{def:eps}
A distribution map $\D(\cdot)$ is \emph{$\epsilon$-sensitive} if for all $\theta, \theta' \in \Theta$:
\begin{equation*}
W_1\big(\D(\theta), \D(\theta')\big) \leq \epsilon\|\theta -\theta'\|_2,
\end{equation*}
where $W_1$ denotes the Wasserstein-1, or earth mover's distance.
\end{definition}

The value of $\epsilon$ indicates the strength of performative effects; small $\epsilon$ means that the distribution induced by the model $f_\theta$ is not overly sensitive to the choice of $\theta$, while large $\epsilon$ indicates high sensitivity. As an extreme case, $\epsilon=0$ implies $\D(\theta)=\D(\theta')$ for all $\theta,\theta'\in\Theta$ and hence there are no performative effects, as in classical supervised learning.

Given how the choice of a model induces a change in distribution, a naturally appealing property of a predictive model in performative settings is that it achieves minimal risk on the distribution that it induces. This solution concept is referred to as \emph{performative stability}.  
\begin{definition}[Performative stability]
A predictive model $f_\thetaPS$ is \emph{peformatively stable}  if
\begin{equation*}
\label{def:PS}
\thetaPS \in \argmin_{\theta} \ploss{\thetaPS}{\theta}.
\end{equation*}
We refer to $\thetaPS$ as being performatively stable, or simply stable, if $f_\thetaPS$ is performatively stable.
\end{definition}

Performative stability captures an equilibrium notion in which a prediction induces a shift in distribution by the environment, yet remains simultaneously optimal for this new distribution. These solutions are referred to as stable since they eliminate the need for retraining. Besides eliminating the need for retraining, there are cases where performatively stable solutions also have good predictive power on the distribution they induce. More specifically,  stable points can imply a small \emph{performative risk}, $\PR(\theta) \defeq\ploss{\theta}{\theta}$, in the case of a strongly convex loss and a reasonably small sensitivity parameter $\epsilon$ (Theorem 4.3, \cite{perdomo2020performative}).
\newpage

To illustrate these abstract concepts, we instantiate a simple traffic prediction example with performative
effects which will serve as a running example throughout the paper.

\begin{example}[ETA estimation]
\label{ex:ETA}
 Suppose that each day we want to estimate the duration of a trip on a fixed route from the current weather conditions. Let $x\in\{0,1\}$ denote a binary indicator of whether the current day is sunny or rainy, and suppose that $\Prob{x=1} = p\in(0,1)$. Let $f_\theta$ denote the deployed model which predicts trip duration $y$ from $x$. Assume $y$ behaves according to the following linear model:
 $$y=\mu + w\cdot x - \epsilon \cdot (f_\theta(x)-\mu),$$
 where $\mu>0$ denotes the usual time needed to complete the route on a sunny day, $w>0$ denotes additional incurred time due to bad weather, and $-\epsilon\cdot(  f_\theta(x)-\mu)$ denotes the performative effects, for some $\epsilon\in(0,1)$. Namely, if the model predicts a faster than usual time to the destination, more people want to take the route, thus worsening the traffic conditions and making $y$ large. If, on the other hand, the model predicts a longer trip, then few people follow the route and the resulting $y$ is smaller. Suppose that the model class is all predictors of the form $f_\theta(x) = x\theta_1 + \theta_2$, where $\theta=(\theta_1,\theta_2)$ and $\theta_1\in(0,w),\theta_2\in(0,2\mu)$. It is not hard to see that the distribution map corresponding to this data-generating process is $\epsilon$-sensitive. 
 
Assume that we measure predictive performance according to the squared loss,
$\ell((x,y);\theta) = \frac{1}{2}(y - \theta_1 x - \theta_2)^2$. Then, 
a simple calculation reveals that the unique performatively stable solution, satisfying Definition \ref{def:PS}, corresponds to
 \vspace{-0.2cm}
\[\thetaPS =\left( \frac{w}{1+\epsilon} , \;\mu\right).\]
In fact, one can show that $\thetaPS$ is simultaneously optimal in the sense that it minimizes the performative risk, $\thetaPS=\argmin_\theta\PR(\theta) =\argmin_\theta\E_{(x,y)\sim\D(\theta)}\ell((x,y);\theta)$.
\end{example}

\subsection{Population-level results}
Before analyzing optimization algorithms in stochastic settings, we first consider their behavior at the population level. Throughout our analysis, we make the following assumptions on the loss $\ell(z;\theta)$, which hold for broad classes of objectives. To ease readability, we let $\mathcal{Z} = \cup_{\theta\in\Theta} \text{supp}(\D(\theta))$.
\begin{enumerate}[leftmargin=30pt,label={\textup{(A\arabic*)}}]
\item	 \label{ass:a1}
(\emph{joint smoothness}) A loss function $\ell(z;\theta)$ is $\beta$-jointly smooth if the gradient\footnote{Gradients of the loss $\ell$ are always taken with respect to the parameters $\theta$.} $\nabla \ell(z;\theta)$ is $\beta$-Lipschitz in $\theta$ \emph{and} $z$, that is for all $\theta,\theta'\in\Theta$ and $z,z' \in \mathcal{Z}$ it holds that,
$$\left\|\nabla\ell(z;\theta) - \nabla \ell(z;\theta')\right\|_2 \leq \beta \left\|\theta - \theta'\right\|_2 \text{ and }  \left\|\nabla\ell(z;\theta) - \nabla \ell(z';\theta)\right\|_2 \leq \beta \left\|z - z'\right\|_2.$$

\item(\emph{strong convexity}) \label{ass:a2} 
A loss function $\ell(z;\theta)$ is $\gamma$-strongly convex if for all $\theta,\theta'\in\Theta$ and $z\in \mathcal{Z}$ it holds that 
$$\ell(z;\theta)\geq \ell(z;\theta') + \nabla \ell(z;\theta')^\top (\theta-\theta') + \frac{\gamma}{2}\left\|\theta-\theta'\right\|_2^2.$$ 
For $\gamma=0$, this is equivalent to convexity.
\end{enumerate}
We will refer to $\frac{\beta}{\gamma}$, where $\beta$ is as in \ref{ass:a1} and $\gamma$ as in \ref{ass:a2}, as the condition number.

In this paper we are interested in the convergence of optimization methods to performative stability. However, unlike classical risk minimizers in supervised learning, it is not a priori clear that performatively stable solutions always exist. We thus recall the following fact regarding existence.

\begin{fact}[\cite{perdomo2020performative}]
\label{fact:existence}
Assume that the loss is $\beta$-jointly smooth \ref{ass:a1} and $\gamma$-strongly convex  \ref{ass:a2}. If $\D(\cdot)$ is $\epsilon$-sensitive with $\epsilon < \frac{\gamma}{\beta}$, then there exists a unique performatively stable point $\thetaPS \in \Theta$.
\end{fact}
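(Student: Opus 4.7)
The plan is to apply the Banach fixed-point theorem to the best-response map $G \colon \Theta \to \Theta$ defined by
\[
G(\theta) := \argmin_{\theta' \in \Theta} \E_{z \sim \D(\theta)} \ell(z; \theta').
\]
Strong convexity \ref{ass:a2} together with the fact that $\Theta$ is closed and convex makes $G(\theta)$ well-defined as a unique minimizer. A parameter vector is performatively stable precisely when it is a fixed point of $G$, so existence and uniqueness of $\thetaPS$ reduces to establishing that $G$ is a strict contraction on $\Theta$, which is a complete metric space as a closed subset of $\R^d$.

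The main step is to show contraction with constant $\epsilon \beta / \gamma < 1$. Fix $\theta_1, \theta_2 \in \Theta$ and write $\theta_i^\star := G(\theta_i)$. First I would combine first-order optimality of $\theta_2^\star$ for the objective $\E_{z \sim \D(\theta_2)} \ell(z; \cdot)$ with the $\gamma$-strong-convexity monotonicity inequality applied to this same objective, yielding
\[
\langle \nabla \E_{z \sim \D(\theta_2)} \ell(z; \theta_1^\star),\; \theta_1^\star - \theta_2^\star \rangle \; \geq \; \gamma \|\theta_1^\star - \theta_2^\star\|_2^2.
\]
Analogously, optimality of $\theta_1^\star$ for $\E_{z \sim \D(\theta_1)} \ell(z; \cdot)$ over the convex set $\Theta$ gives $\langle \nabla \E_{z \sim \D(\theta_1)} \ell(z; \theta_1^\star),\; \theta_1^\star - \theta_2^\star \rangle \leq 0$. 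Subtracting and invoking Cauchy--Schwarz reduces the desired contraction bound to controlling the gradient drift
\[
\big\| \nabla \E_{z \sim \D(\theta_1)} \ell(z; \theta_1^\star) - \nabla \E_{z \sim \D(\theta_2)} \ell(z; \theta_1^\star) \big\|_2
\]
by $\beta \epsilon \|\theta_1 - \theta_2\|_2$.

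The trickiest ingredient, and where the hypothesis $\epsilon < \gamma/\beta$ really enters, is this last gradient drift bound. My plan is to use the $z$-Lipschitz half of joint smoothness \ref{ass:a1}: for any coupling $\pi$ of $\D(\theta_1)$ and $\D(\theta_2)$, Jensen's inequality gives $\|\E_\pi[\nabla \ell(z; \theta_1^\star) - \nabla \ell(z'; \theta_1^\star)]\|_2 \leq \beta \, \E_\pi \|z - z'\|_2$. Passing to the infimum over couplings turns the right-hand side into $\beta \, W_1(\D(\theta_1), \D(\theta_2))$, and $\epsilon$-sensitivity then upgrades it to $\beta \epsilon \|\theta_1 - \theta_2\|_2$. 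Chaining everything produces $\|G(\theta_1) - G(\theta_2)\|_2 \leq (\epsilon \beta / \gamma) \|\theta_1 - \theta_2\|_2$, which is a strict contraction under the stated hypothesis, and Banach's fixed-point theorem delivers the unique $\thetaPS \in \Theta$. The only subtleties I would handle carefully are the interchange of gradient and expectation (routine under \ref{ass:a1}) and tightening to a near-optimal coupling in the $W_1$ step; the rest is standard first-order convex analysis.
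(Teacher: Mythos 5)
Your proposal is correct and follows exactly the argument behind this fact (which the paper imports from Perdomo et al.\ without reproving): establish that $G$ is an $\epsilon\beta/\gamma$-contraction via first-order optimality, strong monotonicity, and the $W_1$/joint-smoothness gradient-drift bound (Lemma~\ref{lemma:wasserstein_application}), then invoke Banach's fixed-point theorem on the closed set $\Theta$. The contraction constant you derive is precisely the one the paper records in Lemma~\ref{lemma:rrm} and identity~(\ref{eqn:iden1}) (note the exponent/ratio in the statement of Lemma~\ref{lemma:rrm} is a typo for $\epsilon\beta/\gamma$), so there is nothing to add.
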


We note that it is not possible to reduce sensitivity by merely rescaling the problem, while keeping the ratio $\gamma/\beta$ the same; the critical condition $\epsilon\beta/\gamma<1$ remains unaltered by scaling.\footnote{The reason is that the notion of \emph{joint} smoothness we consider does not scale like strong convexity when rescaling $\theta$. For example, rescaling $\theta\mapsto 2\theta$ (thus making $\epsilon\mapsto \epsilon/2$) would downscale the strong convexity parameter and the parameter corresponding to the usual notion of smoothness in optimization by a factor of 4, however the smoothness in $z$ would downscale by a factor of 2.}

The upper bound $\epsilon< \gamma / \beta$ on the sensitivity parameter is not only crucial for the existence of unique stable points but also for algorithmic convergence. It defines a regime outside which gradient descent is not guaranteed to converge to stability even at the population level.

To be more precise, consider \emph{repeated gradient descent} (RGD), defined recursively as
\begin{equation*}
\theta_{k+1} = \theta_k - \eta_k \E_{z\sim \D(\theta_k)}[\nabla \ell(z;\theta_k)], ~~ k\geq 1, ~~ \text{where } \theta_1 \in \Theta \text{ is initialized arbitrarily.}
\label{eq:RGD}
\end{equation*}
As shown in the following result, RGD need not converge to a stable point if $\epsilon\geq \frac{\gamma}{\beta}$. Furthermore, a strongly convex loss is necessary to ensure convergence, even if performative effects are arbitrarily weak. 

\begin{proposition}
\label{prop:counterex}
Suppose that the distribution map $\D(\cdot)$ is $\epsilon$-sensitive. Repeated gradient descent can fail to converge to a performatively stable point in any of the following cases, for any choice of positive step size sequence $\{\eta_k\}_{k\geq 1}$:
\begin{itemize}[itemsep=0ex]
\item[(a)] The loss is $\beta$-jointly smooth \ref{ass:a1} and convex, but not strongly convex \ref{ass:a2}, for any $\beta, \epsilon>0$.
\item[(b)] The loss is $\beta$-jointly smooth \ref{ass:a1} and $\gamma$-strongly convex \ref{ass:a2}, but $\epsilon \geq \frac{\gamma}{\beta}$, for any $\gamma, \beta, \epsilon > 0$.
\end{itemize}
\end{proposition}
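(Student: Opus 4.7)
The plan is to prove each part by an explicit one-dimensional counterexample: I will specify $\ell$ and $\D$, verify that all required hypotheses hold, identify (or rule out) the performatively stable point, and then show that the RGD iterates are bounded away from any stable point for every positive step size sequence.

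\textbf{Part (a).} For arbitrary $\beta,\epsilon>0$, I would take $\Theta=\R$, $\ell(z;\theta)=\beta z\theta$, and $\D(\theta)=\delta_{-\epsilon\theta}$. Since $\nabla_\theta\ell(z;\theta)=\beta z$ is $\beta$-Lipschitz in $z$ and constant in $\theta$, the loss is $\beta$-jointly smooth; it is linear in $\theta$, hence convex but not $\gamma$-strongly convex for any $\gamma>0$; and $W_1(\delta_{-\epsilon\theta},\delta_{-\epsilon\theta'})=\epsilon|\theta-\theta'|$ gives $\epsilon$-sensitivity. The set $\argmin_{\theta'}(-\beta\epsilon\theta)\theta'$ is nonempty only when $\theta=0$, so $\theta^\star=0$ is the unique performatively stable point. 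The RGD recursion collapses to $\theta_{k+1}=(1+\eta_k\beta\epsilon)\theta_k$, giving $|\theta_k|=|\theta_0|\prod_{j<k}(1+\eta_j\beta\epsilon)\geq|\theta_0|$ for every positive step size sequence. Starting from any $\theta_0\neq 0$, the iterates stay bounded away from $\theta^\star$, establishing the claim.

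\textbf{Part (b).} Fix $\gamma,\beta,\epsilon>0$ with $\epsilon\geq\gamma/\beta$ (necessarily $\beta\geq\gamma$ for the joint smoothness/strong convexity pair to be simultaneously realizable). Take $\ell(z;\theta)=\tfrac{\gamma}{2}\theta^2-\beta z\theta$ on $\Theta=\R$; its gradient $\gamma\theta-\beta z$ is $\gamma$-Lipschitz in $\theta$ and $\beta$-Lipschitz in $z$, and the quadratic term supplies $\gamma$-strong convexity. In the strict subcase $\epsilon>\gamma/\beta$, set $\D(\theta)=\delta_{\epsilon\theta}$: the stable-point equation $\theta=(\beta\epsilon/\gamma)\theta$ forces $\theta^\star=0$, while RGD becomes $\theta_{k+1}=(1+\eta_k(\beta\epsilon-\gamma))\theta_k$, whose expansion factor strictly exceeds $1$ at every step, so $|\theta_k|\geq|\theta_0|$ for any $\theta_0\neq 0$. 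In the boundary subcase $\epsilon=\gamma/\beta$, set $\D(\theta)=\delta_{\epsilon\theta+1}$; translation preserves $\epsilon$-sensitivity, while the stable-point equation $(\gamma-\beta\epsilon)\theta=\beta$ reduces to the inconsistency $0=\beta$, so no stable point exists and RGD cannot converge to one from any initialization.

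\textbf{Expected obstacle.} The main subtlety is making the non-convergence argument robust to arbitrary positive step size sequences, including those that decay rapidly to zero. The key observation is monotonicity: because each multiplicative factor $1+\eta_k c$ with $c>0$ is strictly greater than $1$, the quantity $|\theta_k|$ is non-decreasing in $k$ regardless of the schedule, so the iterates stay at least $|\theta_0|$ away from $\theta^\star=0$. The second delicate point is the boundary case $\epsilon=\gamma/\beta$ in part (b), which requires a separate construction (a nonzero translation in the distribution map) so that the fixed-point equation becomes unsolvable; this sidesteps the degeneracy at equality, where the centered example would leave $\theta_k$ invariant and thus trivially ``converge'' to any starting point.
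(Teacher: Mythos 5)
Your proof is correct and takes essentially the same approach as the paper: a one-dimensional point-mass distribution map with a linear (resp.\ quadratic) loss, so that RGD reduces to a scalar linear recursion with multiplicative factor at least $1$, plus the observation that no stable point exists at the boundary $\epsilon=\gamma/\beta$. Your centered variants (dropping the constant offset in the distribution) even make the robustness to arbitrary positive step-size sequences slightly cleaner, since $|\theta_k|\ge|\theta_0|$ follows immediately from the homogeneous recursion.
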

%

On the other hand, if $\epsilon < \gamma / \beta$  we prove that RGD converges to a unique performatively stable point at a linear rate. Proposition~\ref{prop:RGD-conv} strengthens the corresponding result of Perdomo et al.~\cite{perdomo2020performative}, who showed linear convergence of RGD for $\epsilon < \gamma/ (\gamma + \beta)$. Proofs can be found in  Appendix \ref{app:RGD}.

\begin{proposition}
\label{prop:RGD-conv}
Assume that the loss is $\beta$-jointly smooth \ref{ass:a1} and $\gamma$-strongly convex  \ref{ass:a2}, and suppose that the distribution map $\D(\cdot)$ is $\epsilon$-sensitive. Let $\epsilon <\frac{\gamma}{\beta}$, and suppose that $\thetaPS\in\text{Int}(\Theta)$ . Then, repeated gradient descent (RGD) with  a constant step size $\eta_k = \eta\defeq \frac{\gamma-\epsilon\beta}{2\left(1 + \epsilon^2\right)\beta^2}$ satisfies the following:
\begin{itemize}[itemsep=0ex]
\item[(a)]	$\|\theta_{k+1} - \thetaPS\|_2 \leq \left(1 - \frac{\eta(\gamma - \epsilon\beta) }{2}\right)\|\theta_k - \thetaPS\|_2$, where $0 <\frac{\eta  (\gamma - \epsilon\beta)}{2} < 1$.
\item[(b)] The iterates $\theta_k$ of RGD converge to the stable point $\thetaPS$ at a linear rate, $\|\theta_{k+1} - \thetaPS\|_2\leq \delta$ for $k\geq \frac{2}{\eta (\gamma - \epsilon\beta)}\log\left(\frac  {\|\theta_1 - \thetaPS\|_2} {\delta}\right)$.
\end{itemize}
\end{proposition}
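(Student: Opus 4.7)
}
The core identity is the one-step expansion
\begin{equation*}
\|\theta_{k+1}-\thetaPS\|_2^2 = \|\theta_k-\thetaPS\|_2^2 - 2\eta\,\bigl\langle F(\theta_k),\,\theta_k-\thetaPS\bigr\rangle + \eta^2\|F(\theta_k)\|_2^2,
\end{equation*}
where $F(\theta)\defeq \E_{z\sim\D(\theta)}\nabla\ell(z;\theta)$. Since $\thetaPS\in\text{Int}(\Theta)$ minimizes $\E_{z\sim\D(\thetaPS)}\ell(z;\,\cdot\,)$, the interior first-order condition gives $F(\thetaPS)=0$, so I may freely replace $F(\theta_k)$ by $F(\theta_k)-F(\thetaPS)$ throughout. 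The plan is to bound the two terms above by a quadratic in $\eta$ with coefficients that depend on $\gamma,\beta,\epsilon$, then set $\eta$ to its optimizer.

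The main technical step is to split $F(\theta_k)$ into a \emph{parametric} part and a \emph{distributional} part via the auxiliary object $H(\theta;\theta')\defeq\E_{z\sim\D(\theta')}\nabla\ell(z;\theta)$:
\begin{equation*}
F(\theta_k)=\underbrace{H(\theta_k;\thetaPS)-H(\thetaPS;\thetaPS)}_{\text{strong convexity / smoothness in $\theta$}} + \underbrace{H(\theta_k;\theta_k)-H(\theta_k;\thetaPS)}_{\text{Wasserstein shift in $z$}}.
\end{equation*}
For the first piece, $\ell(\cdot;\cdot)$ is $\gamma$-strongly convex and $\beta$-smooth on the fixed distribution $\D(\thetaPS)$, so standard arguments yield $\langle H(\theta_k;\thetaPS)-H(\thetaPS;\thetaPS),\theta_k-\thetaPS\rangle\geq\gamma\|\theta_k-\thetaPS\|_2^2$ and $\|H(\theta_k;\thetaPS)-H(\thetaPS;\thetaPS)\|_2\leq\beta\|\theta_k-\thetaPS\|_2$. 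For the second piece, I use that $\nabla\ell(\cdot;\theta_k)$ is $\beta$-Lipschitz in $z$ by \ref{ass:a1}, so Kantorovich--Rubinstein duality (applied coordinate-wise, or via a coupling of $\D(\theta_k)$ and $\D(\thetaPS)$) combined with $\epsilon$-sensitivity gives
\begin{equation*}
\|H(\theta_k;\theta_k)-H(\theta_k;\thetaPS)\|_2 \leq \beta\,W_1\bigl(\D(\theta_k),\D(\thetaPS)\bigr) \leq \epsilon\beta\,\|\theta_k-\thetaPS\|_2.
\end{equation*}

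Combining these via Cauchy--Schwarz and $(a+b)^2\leq 2a^2+2b^2$ gives
\begin{equation*}
\langle F(\theta_k),\theta_k-\thetaPS\rangle \geq (\gamma-\epsilon\beta)\|\theta_k-\thetaPS\|_2^2, \qquad \|F(\theta_k)\|_2^2 \leq 2\beta^2(1+\epsilon^2)\|\theta_k-\thetaPS\|_2^2.
\end{equation*}
Substituting into the one-step expansion yields
\begin{equation*}
\|\theta_{k+1}-\thetaPS\|_2^2 \leq \Bigl(1 - 2\eta(\gamma-\epsilon\beta) + 2\eta^2\beta^2(1+\epsilon^2)\Bigr)\|\theta_k-\thetaPS\|_2^2.
\end{equation*}
The right-hand side is a quadratic in $\eta$ minimized exactly at $\eta=\frac{\gamma-\epsilon\beta}{2(1+\epsilon^2)\beta^2}$, which is precisely the step size in the statement; plugging in collapses the quadratic to $1-\eta(\gamma-\epsilon\beta)$. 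Taking square roots and using $\sqrt{1-x}\leq 1-x/2$ for $x\in[0,1]$ gives part~(a), and the bound $\eta(\gamma-\epsilon\beta)<1$ (so that the contraction factor lies in $(0,1)$) follows from $\epsilon<\gamma/\beta$ together with $\gamma\leq\beta$. Part~(b) is then immediate by iterating the contraction and using $\log(1/(1-c))\geq c$.

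\textbf{Main obstacle.} The only subtle step is the gradient-shift bound $\|H(\theta_k;\theta_k)-H(\theta_k;\thetaPS)\|_2\leq\epsilon\beta\|\theta_k-\thetaPS\|_2$: it requires the \emph{joint} smoothness assumption (Lipschitzness of $\nabla\ell$ in $z$, not merely in $\theta$), together with a coupling argument to turn a Wasserstein distance between distributions into an expectation bound on gradients. Everything else is a careful bookkeeping of the quadratic in $\eta$ so as to recover the exact step size and contraction factor stated in the proposition.
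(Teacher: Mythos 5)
Your proposal is correct and follows essentially the same route as the paper's proof: the same one-step expansion, the same two-way decomposition of the population gradient into a $\theta$-shift on the fixed distribution $\D(\thetaPS)$ (handled by strong convexity and smoothness) and a $z$-shift at fixed parameters (handled by joint smoothness plus $\epsilon$-sensitivity via Kantorovich--Rubinstein, the paper's Lemma~\ref{lemma:wasserstein_application}), leading to the identical quadratic recursion $1-2\eta(\gamma-\epsilon\beta)+2\eta^2\beta^2(1+\epsilon^2)$. The only cosmetic difference is that you substitute the step size before taking the square root while the paper applies $\sqrt{1-x}\leq 1-x/2$ first; both yield the stated contraction factor.
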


Together, these results show that $ \gamma / \beta$ is a sharp threshold for the convergence of gradient descent in performative settings, thereby resolving an open problem in~\cite{perdomo2020performative}. 
Having characterized the convergence regime of gradient descent, we now move on to presenting our main technical results, focusing on the case of a smooth, strongly convex loss with $\epsilon <  \gamma / \beta$.


\section{Stochastic optimization results} 
\label{sec:finite-samples}

We introduce two variants  of the stochastic gradient method for optimization in performative settings (i.e. stochastic gradient descent, SGD), which we refer to  as \textit{greedy deploy} and \textit{lazy deploy}. Each method performs a stochastic gradient update to the model parameters at every iteration, however they choose to deploy these updated models at different time intervals. 

To analyze these methods, in addition to \ref{ass:a1} and \ref{ass:a2}, we make the following assumption which is  customary in the stochastic optimization literature \cite{recht-wright, bottou2018optimization}.

\begin{enumerate}[leftmargin=30pt,start = 3, label={\textup{(A\arabic*)}}]
\item  \label{ass:a3} 
(\emph{second moment bound}) There exist constants $\sigma^2$ and $L^2$ such that for all $\theta, \theta' \in \Theta$:  
\begin{equation*}
\E_{z\sim\D(\theta)} \left[\|\nabla \ell(z;\theta')\|^2_2\right] \leq \sigma^2 + L^2 \|\theta' - G(\theta)\|_2^2, \text{ where } G(\theta) \defeq \argmin_{\theta'} \E_{z\sim\D(\theta)} \ell(z;\theta').
\end{equation*}
\end{enumerate}
Given the operator $G(\cdot)$, performative stability can equivalently be expressed as $\thetaPS \in G(\thetaPS)$.

\subsection{Greedy deploy}
\label{sec:greedy}

A natural algorithm for stochastic optimization in performative prediction is a direct extension of the stochastic gradient method, whereby at every time step, we observe a sample $z^{(k)} \sim \D(\theta_k)$, compute a gradient update to the current model parameters $\theta_k$, and deploy the new model $\theta_{k+1}$ (see left panel in Figure \ref{fig:algorithms}). We call this algorithm \emph{greedy deploy}.
In the context of our traffic prediction example, this greedy procedure corresponds to iteratively updating and redeploying the model based off information from the most recent trip.

While this procedure is algorithmically identical to the stochastic gradient method in traditional convex optimization, in performative prediction, the distribution of the observed samples depends on the trajectory of the algorithm. We begin by stating a technical lemma which introduces a recursion for the distance between $\theta_k$ and $\thetaPS$.
\begin{lemma}
\label{lemma:online-sgd-recursion}
Assume \ref{ass:a1}, \ref{ass:a2} and \ref{ass:a3}. If the distribution map $\D(\cdot)$ is $\epsilon$-sensitive with $\epsilon < \gamma / \beta$, then greedy deploy with step size $\eta_k$ satisfies the following recursion for all $k\geq 1$:
\begin{equation*}
\E\left[\|\theta_{k+1} - \thetaPS\|_2^2\right] \leq \left(1 - 2\eta_k (\gamma - \epsilon\beta) + \eta_k^2 L^2 \left(1 + \epsilon \frac{\beta}{\gamma}\right)^2\right)\bigexp{\|\theta_k - \thetaPS\|_2^2} + \eta_k^2\sigma^2.
\end{equation*}
\end{lemma}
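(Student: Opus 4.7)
The plan is to mimic the standard SGD analysis, expanding the squared distance to $\thetaPS$, taking conditional expectation with respect to the sample $z^{(k)} \sim \D(\theta_k)$ given $\theta_k$, and then patching two places where the non-stationarity of the distribution intervenes.

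First I would write $\theta_{k+1} = \theta_k - \eta_k \nabla \ell(z^{(k)};\theta_k)$ and expand
\[
\|\theta_{k+1} - \thetaPS\|_2^2 = \|\theta_k - \thetaPS\|_2^2 - 2\eta_k \langle \nabla \ell(z^{(k)};\theta_k), \theta_k - \thetaPS\rangle + \eta_k^2 \|\nabla \ell(z^{(k)};\theta_k)\|_2^2,
\]
then take $\E[\cdot \mid \theta_k]$. This leaves two quantities to control: the drift term $\langle \E_{z\sim\D(\theta_k)}\nabla\ell(z;\theta_k), \theta_k - \thetaPS\rangle$ and the noise term $\E_{z\sim\D(\theta_k)}\|\nabla\ell(z;\theta_k)\|_2^2$.

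For the drift term, I would define $F_k(\theta) = \E_{z\sim\D(\theta_k)}\ell(z;\theta)$, which is $\gamma$-strongly convex by \ref{ass:a2}, with $\nabla F_k(\theta_k) = \E_{z\sim\D(\theta_k)}\nabla\ell(z;\theta_k)$. The key trick is to add and subtract $\E_{z\sim\D(\theta_k)}\nabla\ell(z;\thetaPS)$ and use the optimality of $\thetaPS$ for $\D(\thetaPS)$, namely $\E_{z\sim\D(\thetaPS)}\nabla\ell(z;\thetaPS) = 0$ (valid since $\thetaPS\in\text{Int}(\Theta)$). Strong convexity of $F_k$ gives
\[
\langle \E_{z\sim\D(\theta_k)}\nabla\ell(z;\theta_k) - \E_{z\sim\D(\theta_k)}\nabla\ell(z;\thetaPS), \theta_k - \thetaPS\rangle \geq \gamma\|\theta_k - \thetaPS\|_2^2,
\]
while the $\beta$-smoothness in $z$ from \ref{ass:a1} combined with the Kantorovich--Rubinstein duality and $\epsilon$-sensitivity yields
\[
\bigl\| \E_{z\sim\D(\theta_k)}\nabla\ell(z;\thetaPS) - \E_{z\sim\D(\thetaPS)}\nabla\ell(z;\thetaPS)\bigr\|_2 \leq \beta W_1(\D(\theta_k),\D(\thetaPS)) \leq \epsilon\beta\|\theta_k - \thetaPS\|_2.
\]
Combining these by Cauchy--Schwarz produces $\langle \E_{z\sim\D(\theta_k)}\nabla\ell(z;\theta_k),\theta_k-\thetaPS\rangle \geq (\gamma - \epsilon\beta)\|\theta_k - \thetaPS\|_2^2$.

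For the noise term, \ref{ass:a3} bounds it by $\sigma^2 + L^2 \|\theta_k - G(\theta_k)\|_2^2$. To express this in terms of $\|\theta_k - \thetaPS\|_2$, I would invoke the standard contraction property of $G$ under the current assumptions (Perdomo et al.), which says $G$ is $\epsilon\beta/\gamma$-Lipschitz, so together with $G(\thetaPS) = \thetaPS$ and the triangle inequality
\[
\|\theta_k - G(\theta_k)\|_2 \leq \|\theta_k - \thetaPS\|_2 + \|G(\thetaPS) - G(\theta_k)\|_2 \leq \Bigl(1 + \frac{\epsilon\beta}{\gamma}\Bigr)\|\theta_k - \thetaPS\|_2.
\]
Plugging both bounds into the expansion, taking total expectation, and collecting terms yields exactly the claimed recursion.

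The main obstacle I anticipate is the drift term, since $\thetaPS$ does not minimize $F_k$; the add/subtract trick above (isolating the minimizer of $\D(\thetaPS)$ and using $\epsilon$-sensitivity to bound the distortion) is the critical device that makes the $(\gamma - \epsilon\beta)$ factor appear and shows why the threshold $\epsilon < \gamma/\beta$ from \factref{existence} and \propref{RGD-conv} re-emerges at the stochastic level. The noise term is routine once the contraction property of $G$ is invoked, and there is no interaction with the Lipschitz projection onto $\Theta$ because $\thetaPS$ lies in the interior.
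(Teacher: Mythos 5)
Your proposal is correct and follows essentially the same route as the paper's proof: expand the square, lower-bound the cross term by $(\gamma-\epsilon\beta)\|\theta_k-\thetaPS\|_2^2$ via an add-and-subtract that separates the strong-convexity contribution from the distribution-shift contribution (controlled through joint smoothness in $z$, Kantorovich--Rubinstein duality, and $\epsilon$-sensitivity), and bound the second moment via \ref{ass:a3} together with the contraction $\|\thetaPS-G(\theta_k)\|_2\le\epsilon\tfrac{\beta}{\gamma}\|\theta_k-\thetaPS\|_2$. The only structural difference is the order of the splitting: you change the parameter first (comparing $\nabla\ell(\cdot;\theta_k)$ to $\nabla\ell(\cdot;\thetaPS)$ under $\D(\theta_k)$) and then the distribution at the fixed parameter $\thetaPS$, whereas the paper changes the distribution first at the fixed parameter $\theta_k$ and then applies strong convexity under $\D(\thetaPS)$; both orderings yield the identical $\gamma-\epsilon\beta$ constant, so this is cosmetic. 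Two small points to tighten: (i) you invoke $\E_{z\sim\D(\thetaPS)}\nabla\ell(z;\thetaPS)=0$, which requires $\thetaPS\in\text{Int}(\Theta)$ --- an assumption the lemma does not make (it appears only in the RGD proposition); the paper instead uses the variational first-order optimality condition $\E_{z\sim\D(\thetaPS)}\nabla\ell(z;\thetaPS)^\top(\theta_k-\thetaPS)\ge 0$, which suffices and holds for any closed convex $\Theta$, so you should substitute that inequality. (ii) The update should be composed with the projection $\Pi_\Theta$, handled by nonexpansiveness together with $\thetaPS\in\Theta$; this has nothing to do with interiority, contrary to your closing remark.
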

Similar recursions underlie many proofs of SGD, and Lemma \ref{lemma:online-sgd-recursion} can be seen as their generalization to the performative setting. Furthermore, we see how the bound implies a strong contraction to the performatively stable point if the performative effects are weak, that is when $\epsilon \ll \gamma / \beta$.  

Using this recursion, a simple induction argument suffices to prove that greedy deploy converges to the performatively stable solution (see Appendix \ref{app:greedy}). Moreover, it does so at the usual $O(1/k)$ rate. 

\begin{theorem}
\label{thm:online-main}
Assume \ref{ass:a1}, \ref{ass:a2} and \ref{ass:a3}. If the distribution map $\D(\cdot)$ is $\epsilon$-sensitive with $\epsilon < \frac{\gamma}{\beta}$, then for all $k\geq 0$ greedy deploy with step size $\eta_k = \left((\gamma - \epsilon\beta)k+8L^2/(\gamma-\epsilon\beta)\right)^{-1}$ satisfies
$$\E\left[\|\theta_{k+1} - \thetaPS\|_2^2\right] \leq \frac{M_{\mathrm{greedy}}}{(\gamma-\epsilon\beta)^2k+8L^2},$$
 where  $M_{\mathrm{greedy}} = \max\left\{2\sigma^2, 8L^2\|\theta_1 - \thetaPS\|_2^2\right\}$.

\end{theorem}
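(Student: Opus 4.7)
The plan is to prove the bound by induction on $k$, using the one-step contraction from \lemmaref{online-sgd-recursion} as the inductive engine. Let me abbreviate $\tilde{\gamma}\defeq \gamma-\epsilon\beta>0$ and $c_k\defeq \tilde{\gamma}^2 k+8L^2$, so that the prescribed step size becomes $\eta_k=\tilde{\gamma}/c_k$ and the claimed bound reads $\E\|\theta_{k+1}-\thetaPS\|_2^2\le M_{\mathrm{greedy}}/c_k$. Before invoking the recursion I would first simplify the coefficient $L^2(1+\epsilon\beta/\gamma)^2$: since $\epsilon<\gamma/\beta$ gives $\epsilon\beta/\gamma<1$, we can bound $(1+\epsilon\beta/\gamma)^2<4$, which rewrites \lemmaref{online-sgd-recursion} in the clean form
\[
\E\|\theta_{k+1}-\thetaPS\|_2^2 \le \bigl(1-2\eta_k\tilde{\gamma}+4\eta_k^2 L^2\bigr)\E\|\theta_k-\thetaPS\|_2^2+\eta_k^2\sigma^2.
\]
The role of the constants $8L^2$ in $c_k$ and $2\sigma^2$ in $M_{\mathrm{greedy}}$ is precisely to absorb this factor of $4$ and the noise term with room to spare.

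The base case $k=0$ is immediate: $c_0=8L^2$ and by definition $M_{\mathrm{greedy}}\ge 8L^2\|\theta_1-\thetaPS\|_2^2$, so $\|\theta_1-\thetaPS\|_2^2\le M_{\mathrm{greedy}}/c_0$. For the inductive step I assume $\E\|\theta_k-\thetaPS\|_2^2\le M_{\mathrm{greedy}}/c_{k-1}$ and plug into the simplified recursion with $\eta_k=\tilde{\gamma}/c_k$. After clearing the common factor of $c_{k-1}c_k^2$ and using the identity $c_k=c_{k-1}+\tilde{\gamma}^2$ to cancel the leading $Mc_{k-1}c_k$ terms, the desired inequality $\E\|\theta_{k+1}-\thetaPS\|_2^2\le M_{\mathrm{greedy}}/c_k$ reduces to the purely algebraic condition
\[
c_{k-1}\Bigl(1-\tfrac{\sigma^2}{M_{\mathrm{greedy}}}\Bigr)\ge 4L^2-\tilde{\gamma}^2.
\]
This closes readily: the condition $M_{\mathrm{greedy}}\ge 2\sigma^2$ gives $\sigma^2/M_{\mathrm{greedy}}\le 1/2$, and $c_{k-1}\ge 8L^2$ implies that the left-hand side is at least $4L^2$, which clearly exceeds $4L^2-\tilde{\gamma}^2$.

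I do not anticipate any serious obstacle; the proof is classical SGD-style induction once Lemma 3.1 has done the work of turning performativity into an effective strong convexity $\tilde{\gamma}=\gamma-\epsilon\beta$. The only subtle point is bookkeeping: one must verify that the step size's denominator constant ($8L^2$) and the max-constant defining $M_{\mathrm{greedy}}$ ($2\sigma^2$, $8L^2\|\theta_1-\thetaPS\|_2^2$) are large enough to make the one-step contraction inequality a tautology at every $k$. The bound $(1+\epsilon\beta/\gamma)^2<4$ is the step that forgoes sharpness in $\epsilon$ for clean constants; a more refined analysis could retain the exact factor, but the stated $O(1/k)$ rate does not require this.
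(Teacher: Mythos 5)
Your proposal is correct and follows essentially the same route as the paper's proof in Appendix~\ref{app:greedy}: invoke Lemma~\ref{lemma:online-sgd-recursion}, use $\epsilon<\gamma/\beta$ to replace $L^2(1+\epsilon\beta/\gamma)^2$ by $4L^2$, and close an induction whose base case is absorbed by the $8L^2\|\theta_1-\thetaPS\|_2^2$ term and whose noise term is absorbed by the $2\sigma^2$ term. Your algebraic reduction of the inductive step to $c_{k-1}(1-\sigma^2/M_{\mathrm{greedy}})\ge 4L^2-(\gamma-\epsilon\beta)^2$ is a tidier (and verifiably equivalent) packaging of the paper's chain of inequalities ending in $(k+k_0)^2>(k+k_0+1)(k+k_0-1)$.
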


Comparing this result to the traditional analysis of SGD for smooth, strongly convex objectives (e.g. \cite{rakhlinSGDoptimal}), we see that the traditional factor of $\gamma$ is replaced by $\gamma - \epsilon \beta$, which we view as the effective strong convexity parameter of the performative prediction problem. When $\epsilon=0$, there are no performative effects and the problem of finding the stable solution reduces to that of finding the risk minimizer on a fixed, static distribution. Consequently, it is natural for the two bounds to identify.

\begin{figure}[t]
\setlength{\fboxsep}{2mm}
\begin{center}
\begin{boxedminipage}{0.43\textwidth}
\hspace{18mm} {\centering{\underline{Greedy Deploy}}}

\vspace{2mm}
{\bf Input:} step size sequence $\{\eta_{k}\}_{k=1}^\infty$\\
Deploy initial classifier $\theta_1 \in \Theta$\\
{\bf For each} $k = 1,2, \ldots$
\begin{itemize}
	\item Observe $z^{(k)} \sim \D(\theta_k)$
	\item Update model parameters: \\
	$\quad \theta_{k+1} = \theta_{k} - \eta_{k} \nabla \ell(z^{(k)}; \theta_{k})$
	\item Deploy $\theta_{k+1}$	
\end{itemize}
\vspace*{8.4mm}

\end{boxedminipage}
\begin{boxedminipage}{.48\textwidth}
\hspace{23mm} {\centering{\underline{Lazy Deploy}}}

\vspace{2mm}
{\bf Input:} step size sequence $\{\eta_{k,j}\}_{k,j=1}^\infty$\\
Deploy initial classifier $\theta_1 \in \Theta$\\
{\bf For each} $k = 1,2, \ldots$
\begin{itm}
\item Set $\varphi_{k,1} = \theta_k$
\item {\bf For each} $j = 1, \ldots, n(k):$
\begin{enum}
	\item Observe $z^{(k)}_j \sim \D(\theta_k)$
	\item Update model parameters: \\
	$\quad \varphi_{k,j+1} = \varphi_{k,j} - \eta_{k,j} \nabla \ell(z_j^{(k)}; \varphi_{k,j})$\\
\end{enum}
\vspace{-8pt}
\item Deploy $\theta_{k+1} = \varphi_{k, n(k)+1}$
\end{itm}
\end{boxedminipage}

\end{center}
\caption{\label{fig:algorithms} Stochastic gradient method for performative prediction. Greedy deploy publishes the new model at every step while lazy deploy performs several gradient updates before releasing the new model.}
\end{figure}

\subsection{Lazy deploy}
\label{sec:lazy}
Contrary to greedy deploy, lazy deploy collects multiple data points and hence takes multiple stochastic gradient steps between consecutive model deployments. In the setting from Example \ref{ex:ETA}, this corresponds to observing the traffic conditions across multiple days, and potentially diverse conditions, before deploying a new model. 

This modification significantly changes the trajectory of lazy deploy relative to greedy deploy, given that the observed samples follow the distribution of the last \emph{deployed} model, which might differ from the current iterate. More precisely, after deploying $\theta_k$, we perform $n(k)$ stochastic gradient steps to the model parameters, using samples from $ \D(\theta_k)$ before we deploy the last iterate as $\theta_{k+1}$ (see right panel in Figure \ref{fig:algorithms}).

At a high level, lazy deploy converges to performative stability because it progressively approximates \emph{repeated risk minimization} (RRM), defined recursively as,
\begin{equation*}
\theta_{k+1} = G(\theta_k) \defeq  \argmin_{\theta' \in \Theta} \E_{z \sim \D(\theta_k)} \ell(z; \theta') ~~ \text{ for } k\geq 1  ~~ \text{ and } \theta_1 \in \Theta \text{ initialized arbitrarily.}
\end{equation*}
Perdomo et al.~\cite{perdomo2020performative} show that RRM converges to a performatively stable classifier at a linear rate when $\epsilon < \gamma / \beta$. Since the underlying distribution remains static between deployments, a classical analysis of SGD shows that for large $n(k)$ these ``offline" iterates $\varphi_{k,j}$ converge to the risk minimizer on the distribution corresponding to the previously deployed classifier. In particular, for large $n(k)$, $\theta_{k+1}\approx G(\theta_k)$. By virtue of approximately tracing out the trajectory of RRM, lazy deploy converges to $\thetaPS$ as well. This sketch is formalized in the following theorem. For details we refer to Appendix~\ref{app:lazy}.

\begin{theorem}
\label{thm:offline-main}
Assume \ref{ass:a1}, \ref{ass:a2}, and \ref{ass:a3}, and that the distribution map $\D(\cdot)$ is $\epsilon$-sensitive with $\epsilon < \frac{\gamma}{\beta}$. For any $\alpha>0$, running lazy deploy with $n(k) \geq n_0 k^\alpha, ~k=1,2,\dots$ many steps between deployments and step size sequence $\eta_{k,j} = (\gamma j+8L^2 / \gamma)^{-1}$, satisfies
\begin{equation*}
\E \left[\|\theta_{k+1} - \thetaPS\|_2^2\right] \leq c^k \cdot \|\theta_1 - \thetaPS\|_2^2 + \left(c^{\Omega(k)} +  \frac{2}{k^{\alpha \cdot (1 - o(1))}} \right) \cdot M_{\mathrm{lazy}},
\end{equation*}
where $c = \big(\epsilon \frac{\beta}{\gamma}\big)^2 + o(1)$ and $M_{\mathrm{lazy}} = \frac{3(\sigma + \gamma)^2}{\gamma^2 (1 - c)}$. Here, $o(1)$ is independent of $k$ and vanishes as $n_0$ grows; $n_0$ is chosen large enough such that $c<1$.
\end{theorem}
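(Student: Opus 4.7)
The plan is to exploit the observation (already highlighted in the paper) that lazy deploy approximates repeated risk minimization. Between two consecutive deployments the distribution is frozen at $\D(\theta_k)$, so the inner loop is plain SGD on a fixed $\gamma$-strongly convex, $\beta$-smooth objective with minimizer $G(\theta_k)$. I would decompose the error via the triangle inequality applied to the middle point $G(\theta_k)$:
$$\|\theta_{k+1} - \thetaPS\|_2^2 \leq (1+\rho)\,\|\theta_{k+1} - G(\theta_k)\|_2^2 + (1+1/\rho)\,\|G(\theta_k) - \thetaPS\|_2^2,$$
for a parameter $\rho>0$ chosen below. The second term measures how well RRM itself contracts, while the first measures the quality of the SGD approximation to $G(\theta_k)$.

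For the second term I would invoke the RRM-contraction fact from Perdomo et al.\ (valid since $\epsilon<\gamma/\beta$): $\|G(\theta_k) - \thetaPS\|_2 \leq (\epsilon\beta/\gamma)\,\|\theta_k - \thetaPS\|_2$. Picking $\rho$ large enough (but constant) makes the coefficient $c \defeq (1+1/\rho)(\epsilon\beta/\gamma)^2$ strictly less than one and of the advertised form $(\epsilon\beta/\gamma)^2 + o(1)$, where the $o(1)$ term is absorbed into the choice of $n_0$. For the first term I would run the greedy-deploy argument already developed in Theorem~\ref{thm:online-main}, but on the \emph{fixed} distribution $\D(\theta_k)$: this reduces to the classical smooth-strongly-convex SGD bound with the step size $\eta_{k,j} = (\gamma j + 8L^2/\gamma)^{-1}$, which yields
$$\Ex{\|\theta_{k+1} - G(\theta_k)\|_2^2 \mid \theta_k} \leq \frac{\max\!\big(2\sigma^2,\ 8L^2\|\theta_k - G(\theta_k)\|_2^2\big)}{\gamma^2 n(k) + 8L^2}.$$
Using $\|\theta_k - G(\theta_k)\|_2 \leq (1+\epsilon\beta/\gamma)\|\theta_k-\thetaPS\|_2$ (triangle inequality plus the same RRM contraction), the numerator can be controlled by a constant multiple of $\sigma^2 + \gamma^2 \Ex{\|\theta_k-\thetaPS\|_2^2}$; the $\gamma^2$ factor can then be folded into the constant $c$ at the cost of a further $o(1)$ adjustment via $n_0$.

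Combining these ingredients and taking expectations yields a one-step recursion of the form $\Delta_{k+1} \leq c\,\Delta_k + B/n(k)$, where $\Delta_k \defeq \Ex{\|\theta_k - \thetaPS\|_2^2}$, $c<1$, and $B$ is a constant proportional to $(\sigma+\gamma)^2/\gamma^2$. Unrolling gives $\Delta_{k+1} \leq c^k \Delta_1 + B \sum_{j=1}^{k} c^{k-j}/n(j)$. With $n(j) \geq n_0 j^\alpha$, I would split the sum at $j=k/2$: the low-$j$ portion contributes $O(c^{k/2})$ and the high-$j$ portion contributes $O(k^{-\alpha}/(1-c))$, matching the claimed bound $c^k\|\theta_1-\thetaPS\|_2^2 + (c^{\Omega(k)} + 2/k^{\alpha(1-o(1))})M_{\mathrm{lazy}}$. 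The small slackness $\alpha(1-o(1))$ in the polynomial rate comes from absorbing lower-order prefactors (e.g.\ the $\log k$ that can arise in a borderline split) into an $o(1)$ exponent drift.

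The main obstacle I anticipate is the inner-SGD bound: unlike ordinary SGD, the ``initial iterate'' $\theta_k$ for the $k$-th phase is random and its distance to $G(\theta_k)$ itself depends on $\theta_k$, so the $\max(2\sigma^2, 8L^2 \cdot\text{init}^2)$ factor is not uniformly bounded. Handling this requires either carrying the $\Delta_k$-dependent term through the recursion and showing it can be folded into the multiplicative factor $c$ (which is how the $o(1)$ in $c$ arises), or a slightly different phrasing of the inner bound that keeps the additive and multiplicative parts separated. Getting the interaction between the multiplicative contraction and the accumulated additive $1/n(k)$-noise clean enough to match the stated form, including the exponent $\alpha(1-o(1))$, is the most delicate bookkeeping step.
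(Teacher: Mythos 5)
Your proposal is correct in its overall architecture and matches the paper's strategy: decompose the error around the intermediate point $G(\theta_k)$, contract the outer term via the RRM Lipschitz bound $\|G(\theta_k)-\thetaPS\|_2\leq \epsilon\tfrac{\beta}{\gamma}\|\theta_k-\thetaPS\|_2$, control the inner term by classical SGD on the frozen distribution $\D(\theta_k)$ (conditioning on $\theta_k$ and folding the resulting $\E\|\theta_k-G(\theta_k)\|_2^2$-dependence into the contraction factor, exactly as you anticipate in your ``main obstacle'' paragraph), and then unroll a recursion of the form $\Delta_{k+1}\leq c\Delta_k + B/n(k)$ using a geometric-versus-polynomial sum split. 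The one genuine divergence is how the cross term is treated. The paper expands the square exactly, keeps the term $2\E[\|\theta_{k+1}-G(\theta_k)\|_2\,\|G(\theta_k)-\thetaPS\|_2]$, bounds $\E\|\theta_{k+1}-G(\theta_k)\|_2$ via Jensen (which decays only as $1/\sqrt{n(k)}$), and then applies an AM--GM step with a tunable exponent $\alpha_0\in(0,1)$; this $\alpha_0$ is the sole source of the degraded rate $k^{-\alpha(1-o(1))}$ and of the extra terms in $c$. You instead apply a weighted Young's inequality $(1+\rho)/(1+1/\rho)$ up front, which eliminates the cross term and, done carefully, yields the cleaner rate $k^{-\alpha}$ with no exponent loss. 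The one detail you should fix: with $\rho$ a fixed constant, $c=(1+1/\rho)(\epsilon\beta/\gamma)^2$ is strictly larger than $(\epsilon\beta/\gamma)^2$ by an amount that does \emph{not} vanish as $n_0$ grows, so the stated form $c=(\epsilon\beta/\gamma)^2+o(1)$ is not recovered (and for $\epsilon\beta/\gamma$ close to $1$ you must take $\rho$ large just to get $c<1$). The cure is to let $\rho$ grow with $n_0$, e.g.\ $\rho=\sqrt{n_0}$: then $(1+1/\rho)(\epsilon\beta/\gamma)^2=(\epsilon\beta/\gamma)^2+o(1)$ while the inflated inner term $(1+\rho)\cdot O(1/n(k))=O(1/(\sqrt{n_0}\,k^\alpha))$ still decays, so both the multiplicative contribution to $c$ and the additive noise remain $o(1)$ in $n_0$. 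With that adjustment your argument goes through and in fact slightly strengthens the theorem's polynomial rate.
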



\subsection{Discussion}
\label{sec:tradeoffs}

In this section, we have presented how varying the intervals at which we deploy models trained with stochastic gradient descent in performative settings leads to qualitatively different algorithms. While greedy deploy resembles classical SGD with a step size sequence adapted to the strength of distribution shift, lazy deploy can be viewed as a rough approximation of repeated risk minimization. 

As we alluded to previously, the convergence behavior of both algorithms is critically affected by the strength of performative effects $\epsilon$. For $\epsilon \ll \gamma / \beta$, the effective strong convexity parameter $\gamma - \epsilon \beta$ of the performative prediction problem is large. In this setting, the relevant distribution shift of deploying a new model is neglible and greedy deploy behaves almost like SGD in classical supervised learning, converging quickly to performative stability. 

Conversely, for $\epsilon$ close to the convergence threshold, the contraction of greedy deploy to the performatively stable classifier is weak. In this regime, we expect lazy deploy to perform better since the convergence of the offline iterates $\varphi_{k,j}$ to the risk minimizer on the current distribution $G(\theta_k)$ is unaffected by the value of $\epsilon$. Lazy deploy then converges by closely mimicking the behavior of RRM.

Furthermore, both algorithms differ in their sensitivity to different initializations. In greedy deploy, the initial distance $\|\theta_1 - \thetaPS\|_2^2$ decays polynomially, while in lazy deploy it decays at a linear rate. This suggests that the lazy deploy algorithm is more robust to poor initialization. While we derive these insights purely by inspecting our upper bounds, we find that these  observations also hold empirically, as shown in the next section. 

In terms of the asymptotics of both algorithms, we identify the following tradeoff between the number of samples and the number of deployments sufficient to converge to performative stability.

\begin{corollary}\label{corollary:comparison} 
Assume \ref{ass:a1}, \ref{ass:a2}, and \ref{ass:a3}, and that  $\D(\cdot)$ is $\epsilon$-sensitive with $\epsilon < \frac{\gamma}{\beta}$.
\begin{itemize}[leftmargin=15pt,nolistsep]
 \setlength{\itemsep}{0pt}
\item	To ensure that greedy deploy returns a solution $\theta^\star$ such that, $\bigexp{\norm{\theta^\star - \thetaPS}^2} \leq \delta,$
	it suffices to collect $\mathcal{O}(1\;/\;\delta)$ samples and to deploy $\mathcal{O}(1\;/\;\delta)$ classifiers. 

	\item To achieve the same guarantee using lazy deploy, it suffices to collect $\mathcal{O}(1\; / \;\delta^{\frac{\alpha + 1}{(1-\omega) \cdot \alpha}})$ samples and to deploy $\mathcal{O}(1 \;/ \;\delta^{\frac{1}{\alpha}})$ classifiers,  for any $\alpha > 0$ and some $\omega = 1 - o(1)$ which tends to 1 as $n_0$ grows. 
\end{itemize}
\end{corollary}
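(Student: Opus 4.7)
My plan is to prove both bullets by direct algebraic inversion of the convergence rates in Theorem~\ref{thm:online-main} and Theorem~\ref{thm:offline-main}, coupled with an accounting of the per-round sample and deployment cost of each algorithm.

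For the greedy bullet, I would apply Theorem~\ref{thm:online-main} with its prescribed step size to obtain the bound $\E\|\theta_{k+1}-\thetaPS\|_2^2 \leq M_{\mathrm{greedy}} / ((\gamma-\epsilon\beta)^2 k + 8L^2)$. Setting the right-hand side equal to $\delta$ and solving for $k$ gives $k = O(1/\delta)$. Since greedy deploy observes exactly one sample and deploys exactly one model at every iteration, both the cumulative sample count and the cumulative deployment count are equal to $k$, yielding the claimed $O(1/\delta)$ bound for each.

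For the lazy bullet, let $K$ denote the number of deployment rounds. Theorem~\ref{thm:offline-main} bounds the error after $K$ rounds by $c^K\|\theta_1-\thetaPS\|_2^2 + (c^{\Omega(K)} + 2 K^{-\alpha(1-o(1))}) M_{\mathrm{lazy}}$, where $c<1$ once $n_0$ is chosen sufficiently large. I would split the target $\delta$ among the three summands: the two geometrically-decaying terms are each at most $\delta/3$ once $K = \Omega(\log(1/\delta))$, while the polynomial term $2 M_{\mathrm{lazy}} K^{-\alpha\omega}$, writing $\omega = 1-o(1)$, forces the dominant requirement $K = O(\delta^{-1/(\alpha\omega)})$. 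This yields the stated deployment count. For the sample count, note that round $k$ uses $n(k) = n_0 k^\alpha$ samples, so the cumulative budget after $K$ rounds is $\sum_{k=1}^K n_0 k^\alpha = \Theta(K^{\alpha+1})$; substituting the bound on $K$ gives the claimed $O(\delta^{-(\alpha+1)/(\alpha\omega)})$ samples.

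The computations themselves are routine and I expect no conceptual obstacle. The only subtlety is the factor $\omega = 1-o(1)$ inherited from Theorem~\ref{thm:offline-main}, which mildly perturbs the exponents of $1/\delta$ relative to the clean values $1/\alpha$ and $(\alpha+1)/\alpha$. By choosing $n_0$ sufficiently large we can push $\omega$ arbitrarily close to $1$, which matches the asymptotic form in the corollary's statement.
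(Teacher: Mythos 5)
Your argument is correct and follows essentially the same route as the paper: invert the rate of Theorem~\ref{thm:online-main} for greedy deploy, and for lazy deploy invert the $O(K^{-\alpha\omega})$ rate of Theorem~\ref{thm:offline-main} and combine it with the cumulative sample count $\sum_{k=1}^{K} n_0 k^\alpha = \Theta(K^{\alpha+1})$. One remark: your derived sample exponent $\frac{\alpha+1}{\omega\alpha}$ agrees with the paper's own proof (and with the surrounding discussion that the exponent tends to $\frac{\alpha+1}{\alpha}$ as $n_0$ grows), whereas the $\frac{\alpha+1}{(1-\omega)\alpha}$ printed in the corollary's statement appears to be a typo, since it would diverge as $\omega \to 1$.
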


We see from the above result that by choosing large enough values of $n_0$ and $\alpha$, we can make the sample complexity of the lazy deploy algorithm come  arbitrarily close to that of greedy deploy. However, to match the same convergence guarantee, lazy deploy only performs $\mathcal{O}(1 \;/ \;\delta^{\frac{1}{\alpha}})$ deployments, which is significantly better than the $\mathcal{O}(1 \; / \; \delta)$ deployments for greedy deploy.

This reduction in the number of deployments is particularly relevant when considering the settings that performative prediction is meant to address. Whenever we use prediction in social settings, there are important social costs associated with making users adapt to a new model \cite{socialcost}. Furthermore, in industry, there are often significant technical challenges associated with deploying a new classifier \cite{MLstate}. By choosing $n(k) = n_0k^\alpha$ appropriately, we can reduce the number of deployments necessary for lazy deploy to converge while at the same time improving the sample complexity of the algorithm.

\section{Experiments}
\label{sec:exp}

We complement our theoretical analysis of greedy and lazy deploy with a series of empirical evaluations\footnote{Code is available at \href{https://github.com/zykls/performative-prediction}{https://github.com/zykls/performative-prediction}.}. First, we carry out experiments using synthetic data where we can analytically compute stable points and carefully evaluate the tradeoffs suggested by our theory. Second, we evaluate the performance of these procedures on a strategic classification simulator previously used as a benchmark for optimization in performative settings by \cite{perdomo2020performative}.

\subsection{Synthetic data}
\label{sec:gauss}

For our first experiment, we consider the task of estimating the mean of a Gaussian random variable under performative effects. In particular, we consider minimizing the expected squared loss $\ell(z;\theta) =\tfrac 1 2  (z-\theta)^2$ where $z \sim \D(\theta)=\cN(\mu + \epsilon \theta,\sigma^2)$. For $\epsilon > 0$, the true mean of a distribution $\D(\theta)$ depends on our revealed estimate $\theta$. Furthermore, for $\epsilon <  \gamma  /\beta = 1$, the problem has a unique stable point. A short algebraic manipulation shows that $\thetaPS= \frac {\mu}{1-\epsilon}$. As per our theory, both greedy and lazy deploy converge to performative stability for all $\epsilon < 1$.
\begin{figure}[t!]
\centering
\subfigure[$\epsilon=0.2$]{\includegraphics[width = 0.325\textwidth]{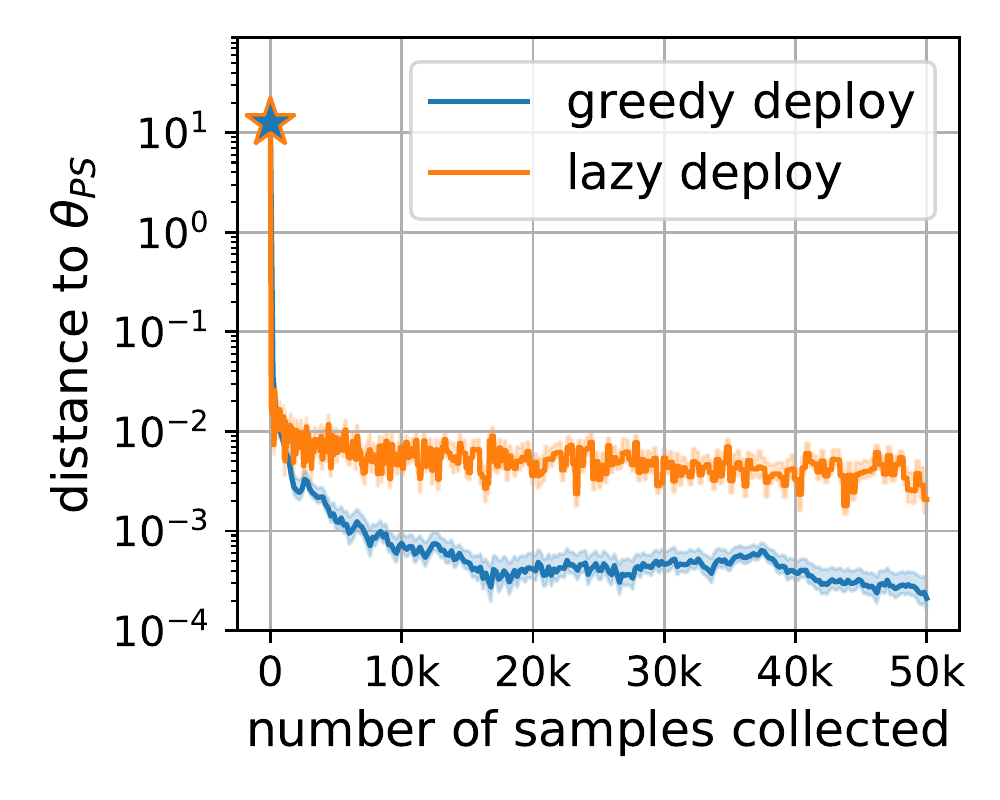}\label{fig:eps0}}
\subfigure[$\epsilon=0.6$]{\includegraphics[width = 0.325\textwidth]{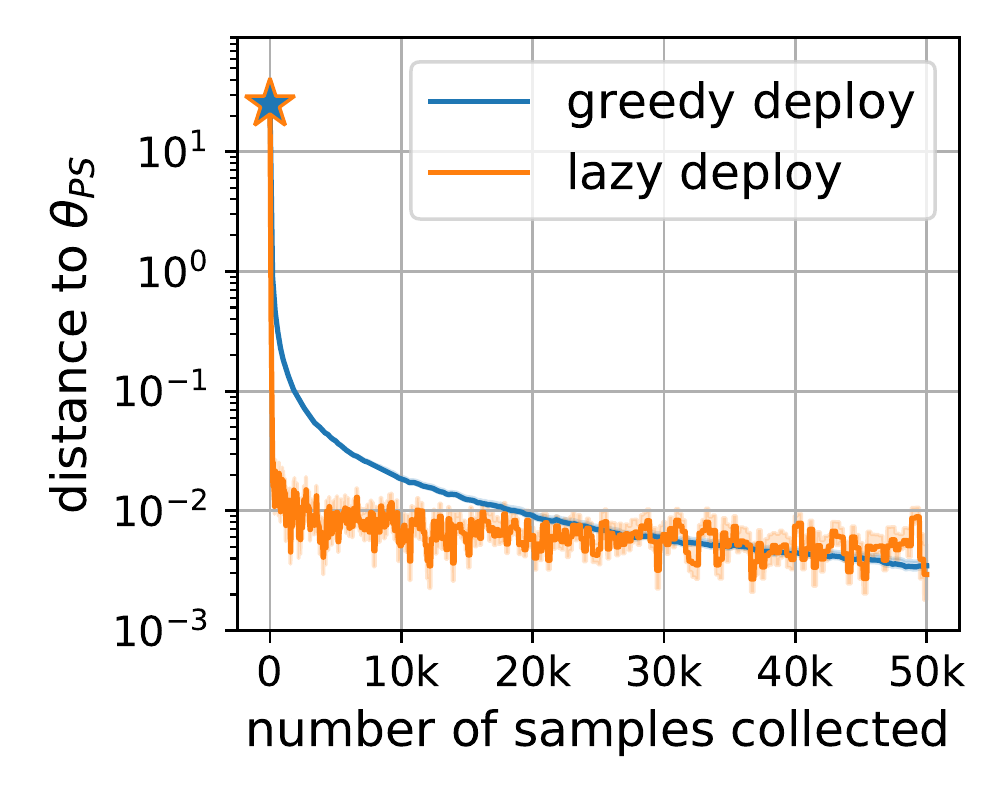}\label{fig:eps05}}
\subfigure[$\epsilon=0.9$]{\includegraphics[width = 0.325\textwidth]{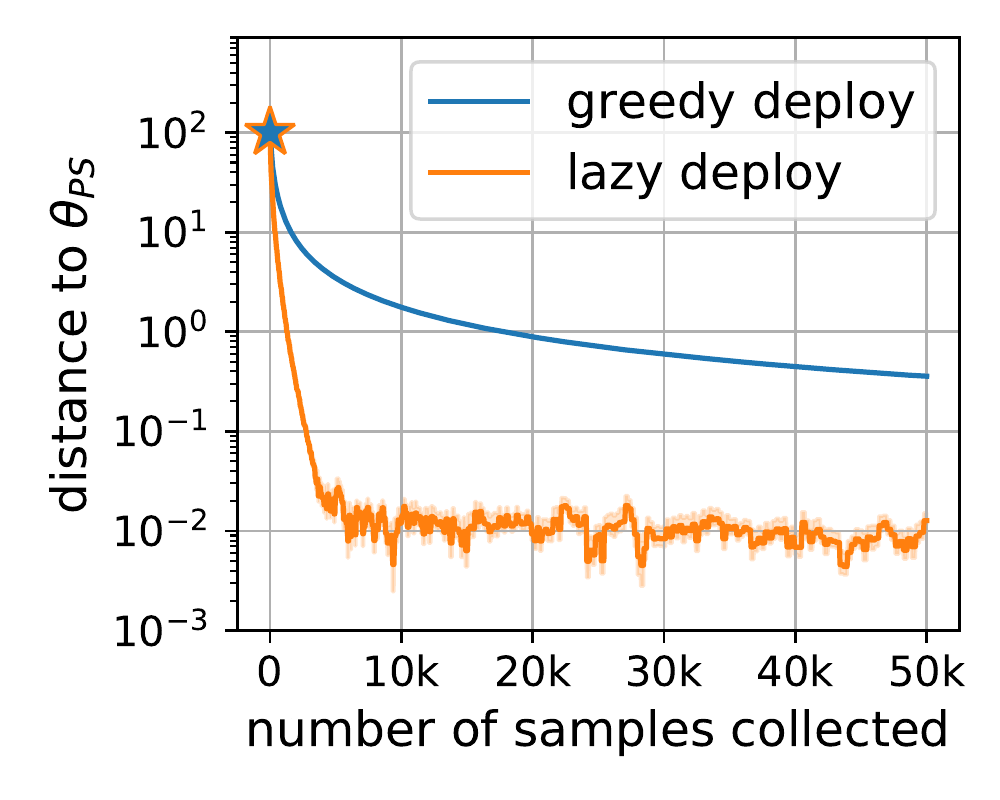}\label{fig:eps08}}
\caption{Convergence of lazy and greedy deploy to performative stability for varying values of $\epsilon$. We use $n(k)=k$ for lazy deploy. The results are for the synthetic Gaussian example  with $\mu=10$, $\sigma=0.1$.\vspace{-0.2cm}
}
\label{fig:eps}
\end{figure}

\paragraph{Effect of performativity.} 
We compare the convergence behavior of lazy deploy and greedy deploy for various values of $\epsilon$ in Figure \ref{fig:eps}. We choose step sizes for both algorithms according to our theorems in Section \ref{sec:finite-samples}. In the case of lazy deploy, we set $\alpha=1$, and hence $n(k)\propto k$.

We see that when performative effects are weak, i.e. $\epsilon \ll \gamma / \beta$, greedy deploy outperforms lazy deploy. Lazy deploy in turn is better at coping with large distribution shifts from strong performative effects. These results confirm the conclusions from our theory and show that the choice of whether to delay deployments or not can indeed have a large impact on algorithm performance depending on the value of $\epsilon$.

\vspace{-0.2cm}
\paragraph{Deployment schedules.} We also experiment with different deployment schedules $n(k)$ for lazy deploy. As described in Theorem \ref{thm:offline-main}, we can choose $n(k) \propto k^\alpha$ for all $\alpha > 0$. The results for $\alpha \in \{0.5, 1, 2\}$ and $\eps \in \{.2, .6,.9\}$, are depicted and compared to the population-based RRM algorithm,
 in Figure \ref{fig:gauss} in the Appendix. We find that shorter deployment schedules, i.e., smaller $\alpha$, lead to faster progress during initial stages of the optimization, whereas longer deployments schedules fare better in the long run while at the same time significantly reducing the number of deployments.

\begin{figure}[t!]
\centering
\subfigure{\includegraphics[height = 0.27\textwidth]{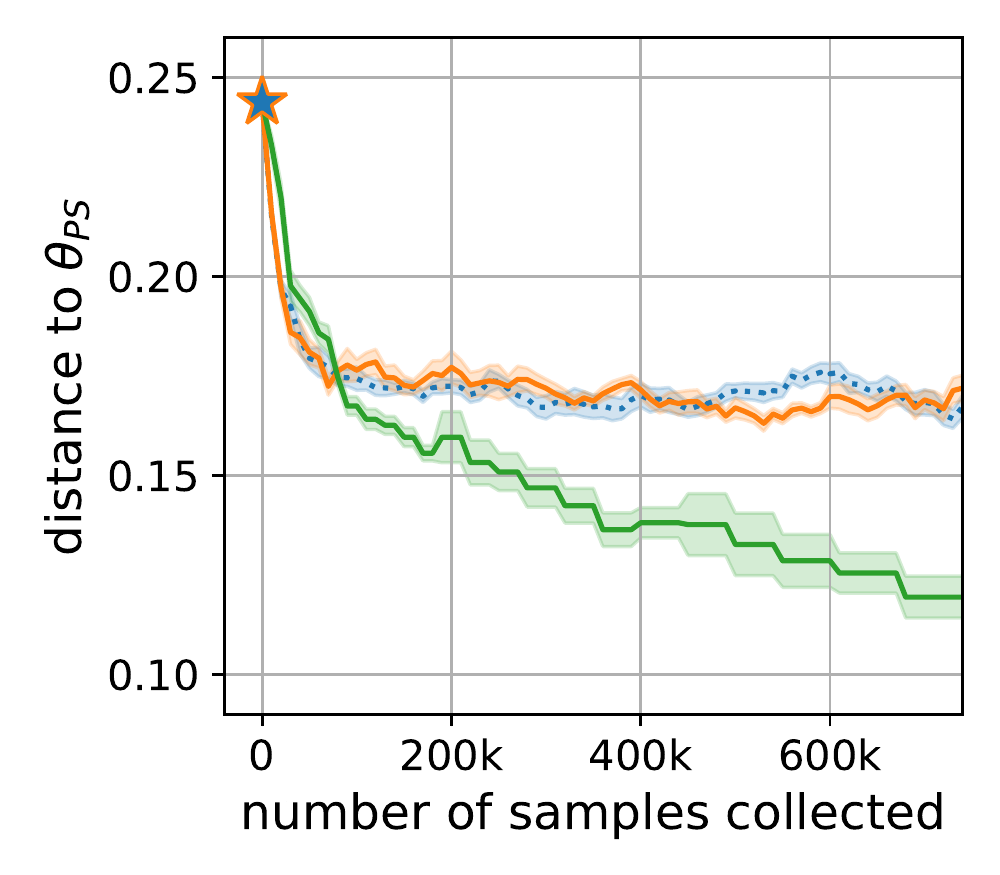}\label{fig:credit1}}\hspace{0.5cm}
\subfigure{\includegraphics[height = 0.27\textwidth]{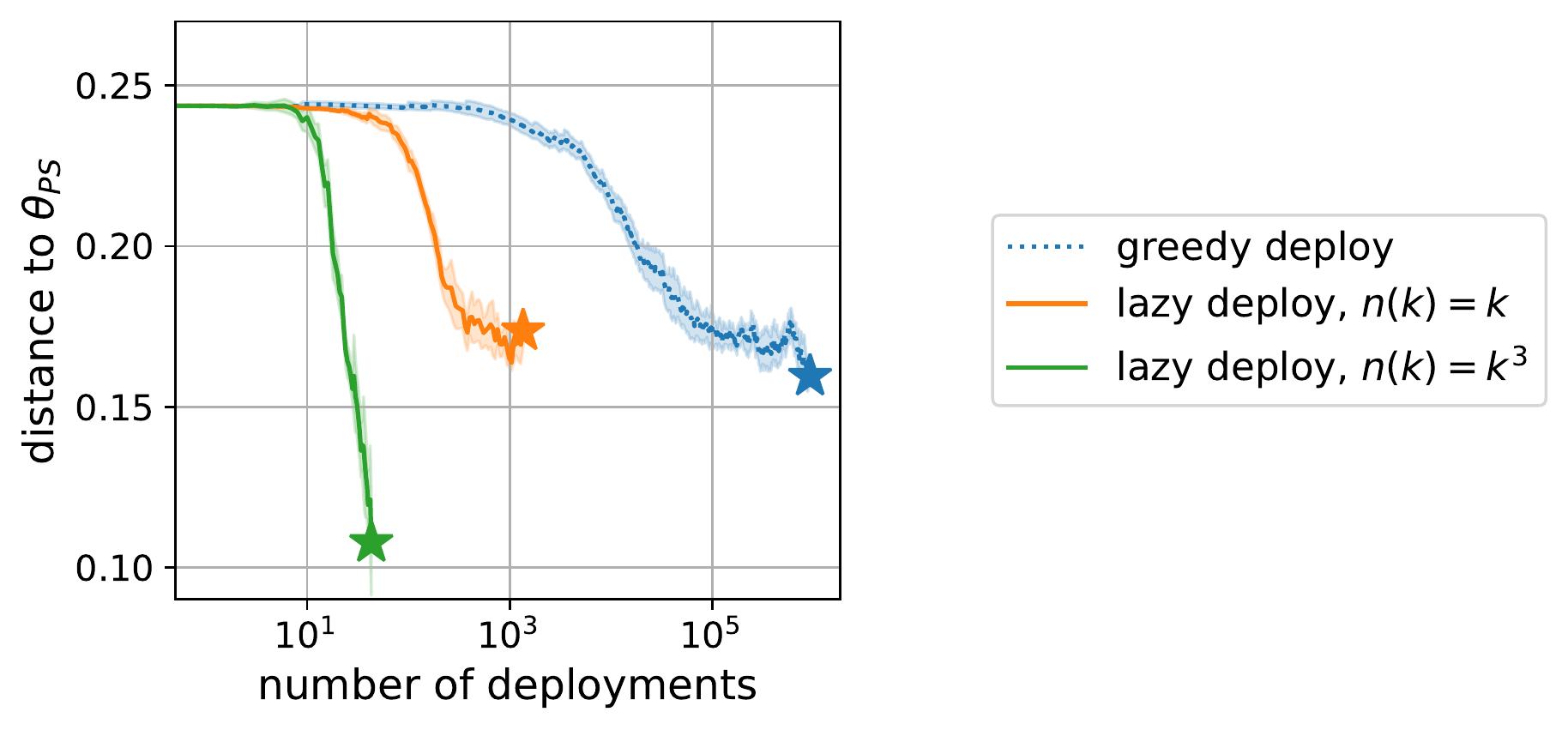}\label{fig:credit2}}
\caption{Convergence of lazy and greedy deploy to performative stability. Results are for the strategic classification experiments with $\epsilon= 100$. (left panel) convergence as a function of the number of samples. (right panel) convergence as a function of the number of deployments.}
\label{fig:credit}
\end{figure}

\subsection{Strategic classification}
\label{sec:credit}
In addition to the experiments on synthetic data, we also evaluate the performance of the two optimization procedures in a simulated strategic classification setting. Strategic classification is a two-player game between an institution which deploys a classifier $f_\theta$ and individual agents who manipulate their features in order to achieve a more favorable classification. 

Perdomo et al. \cite{perdomo2020performative} introduce a credit scoring simulator in which a bank deploys a logistic regression classifier to determine the probability that an individual will default on a loan. Individuals correspond to feature, label pairs $(x,y)$ drawn from a Kaggle credit scoring dataset \cite{creditdata}. Given the bank's choice of a classifier $f_\theta$, individuals solve an optimization problem to compute the best-response set of features, $x_{\mathrm{BR}}$. This optimization procedure is parameterized by a value $\epsilon$ which determines the extent to which agents can change their respective features. The bank then observes the manipulated data points $(x_{\mathrm{BR}}, y)$. This data-generating process can be described by a distribution map, which we can verify is $\epsilon$-sensitive. For additional details we refer to Appendix \ref{app:experiments}. 

At each time step, the learner observes a single sample from the distribution in which the individual's features have been manipulated in response to the most recently deployed classifier. This is in contrast to the experimental setup in \cite{perdomo2020performative}, where the learner gets to observe the entire distribution of manipulated features at every step. While we cannot compute the stable point analytically in this setting, we can calculate it empirically by running RRM until convergence.

\paragraph{Results.} 
The inverse condition number of this problem is much smaller than in the Gaussian example; we have $\gamma /\beta \approx 10^{-2}$. We fist pick $\epsilon$ within the regime of provable convergence, i.e., $\epsilon=10^{-3}$, and compare the two methods. As expected, for such a small value of $\epsilon$ greedy deploy is the preferred method. Results are depicted in Figure \ref{fig:creditsmall} in the Appendix. 

Furthermore, we explore the behavior of these algorithms outside the regime of provable convergence with $\epsilon \gg \gamma / \beta$. We choose step sizes for both algorithms as defined in Section \ref{sec:finite-samples} with the exception that we ignore the $\epsilon$-dependence in the step size schedule of greedy deploy and choose the same initial step size as for lazy deploy (Theorem \ref{thm:online-main}). As illustrated in Figure \ref{fig:credit} (left), lazy significantly outperforms greedy deploy in this setting. Moreover, the performance of lazy deploy significantly improves with $\alpha$. In addition to speeding up convergence, choosing larger sample collection schedules $n(k)$ substantially reduces the number of deployments, as seen in Figure \ref{fig:credit} (right).

\section*{Acknowledgements}
We wish to acknowledge support from the U.S. National Science Foundation Graduate Research Fellowship Program and the Swiss National Science Foundation Early Postdoc Mobility Fellowship Program.


\newpage
\appendix

\section{Additional evaluations and details on experimental setup}
\label{app:experiments}

\begin{figure}[h!]
\vspace{-0.5cm}
\centering
\subfigure{\includegraphics[width = 0.32\textwidth]{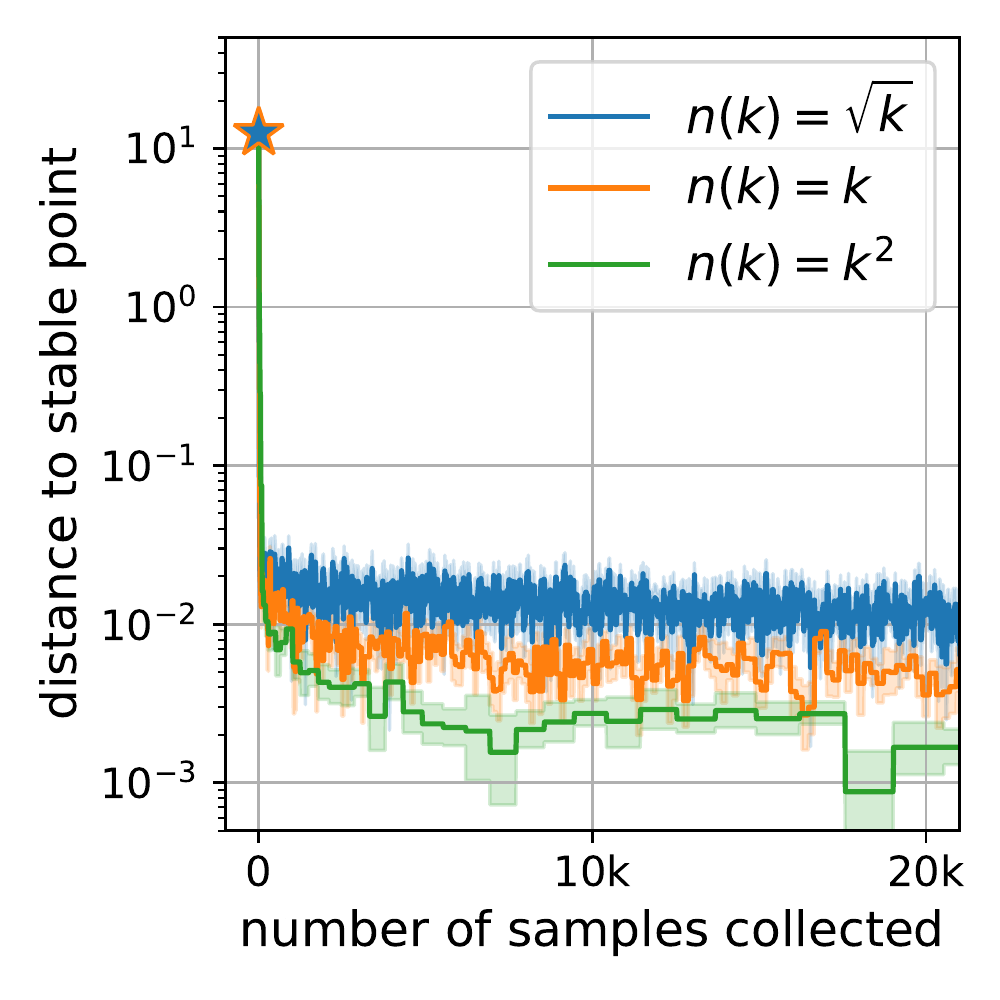}}
\subfigure{\includegraphics[width = 0.32\textwidth]{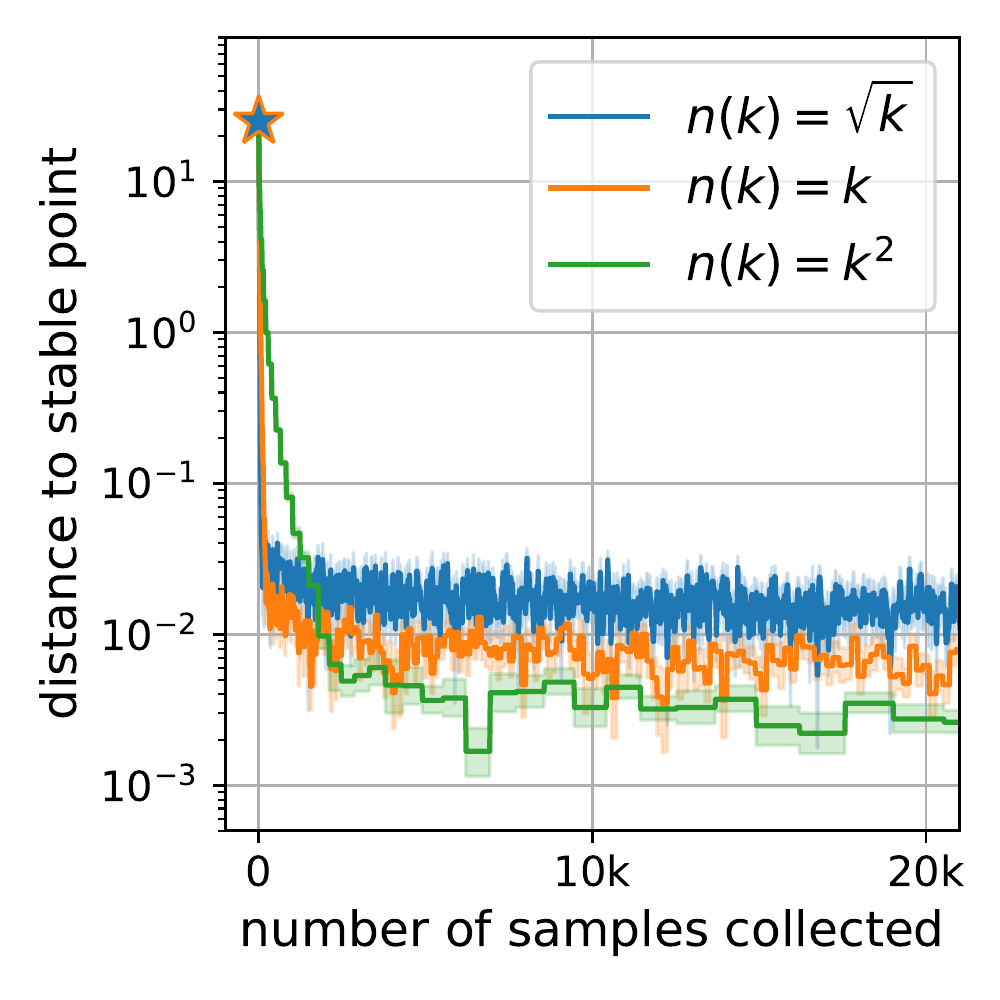}}
\subfigure{\includegraphics[width = 0.32\textwidth]{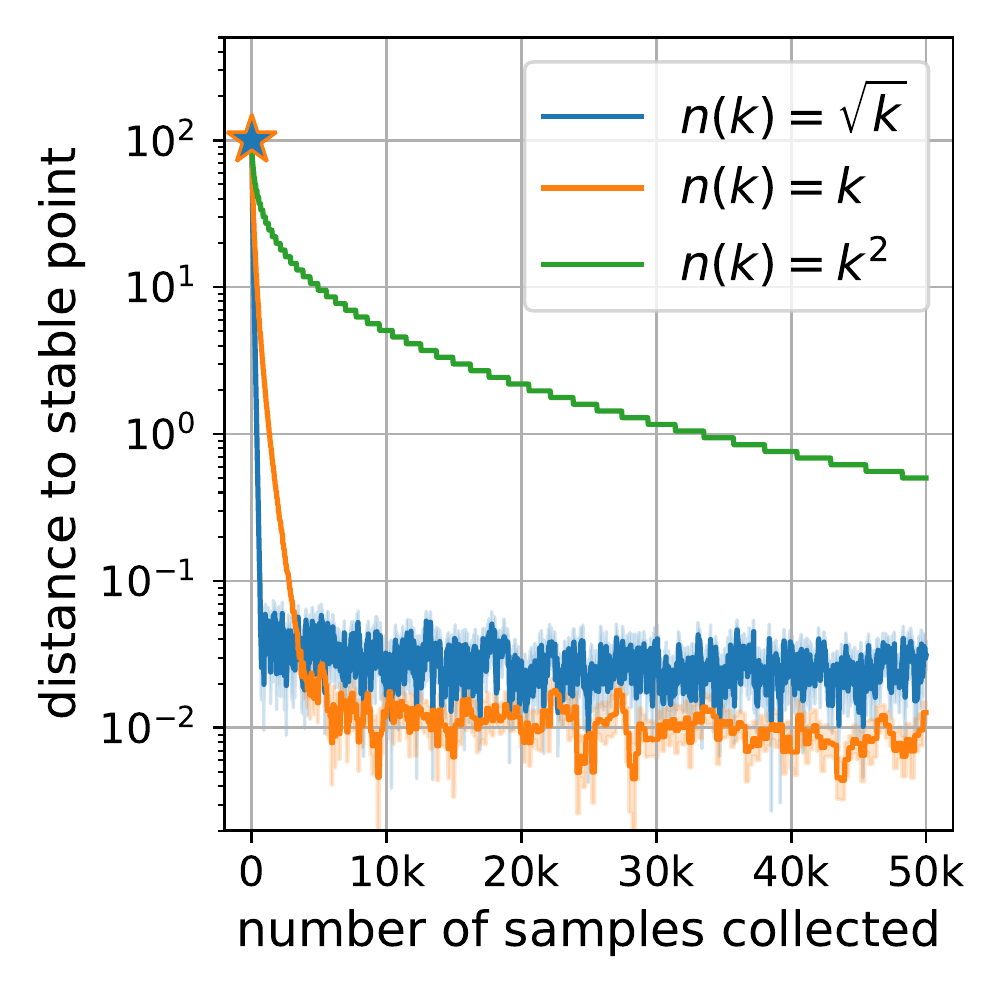}}
\vspace{-0.2cm}
\setcounter{subfigure}{0}

\subfigure[$\epsilon=0.2$]{\includegraphics[width = 0.32\textwidth]{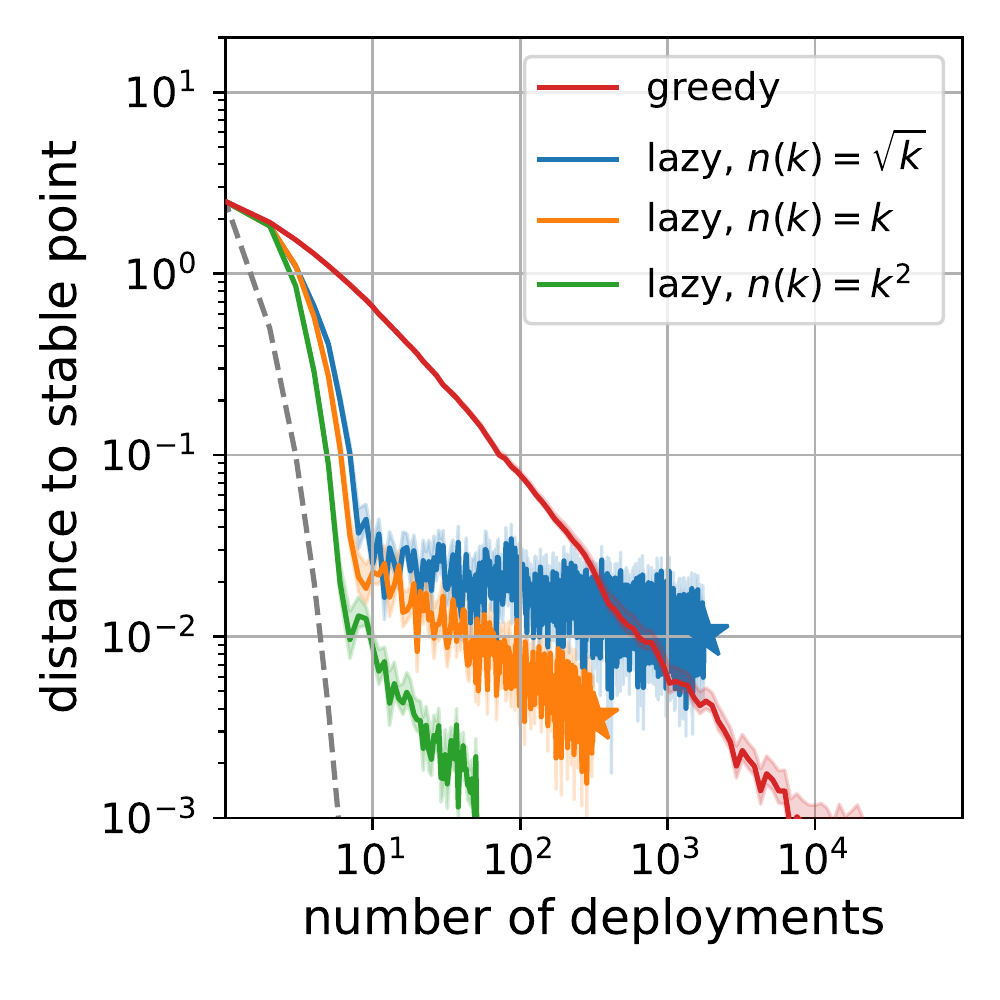}}
\subfigure[$\epsilon=0.6$]{\includegraphics[width = 0.32\textwidth]{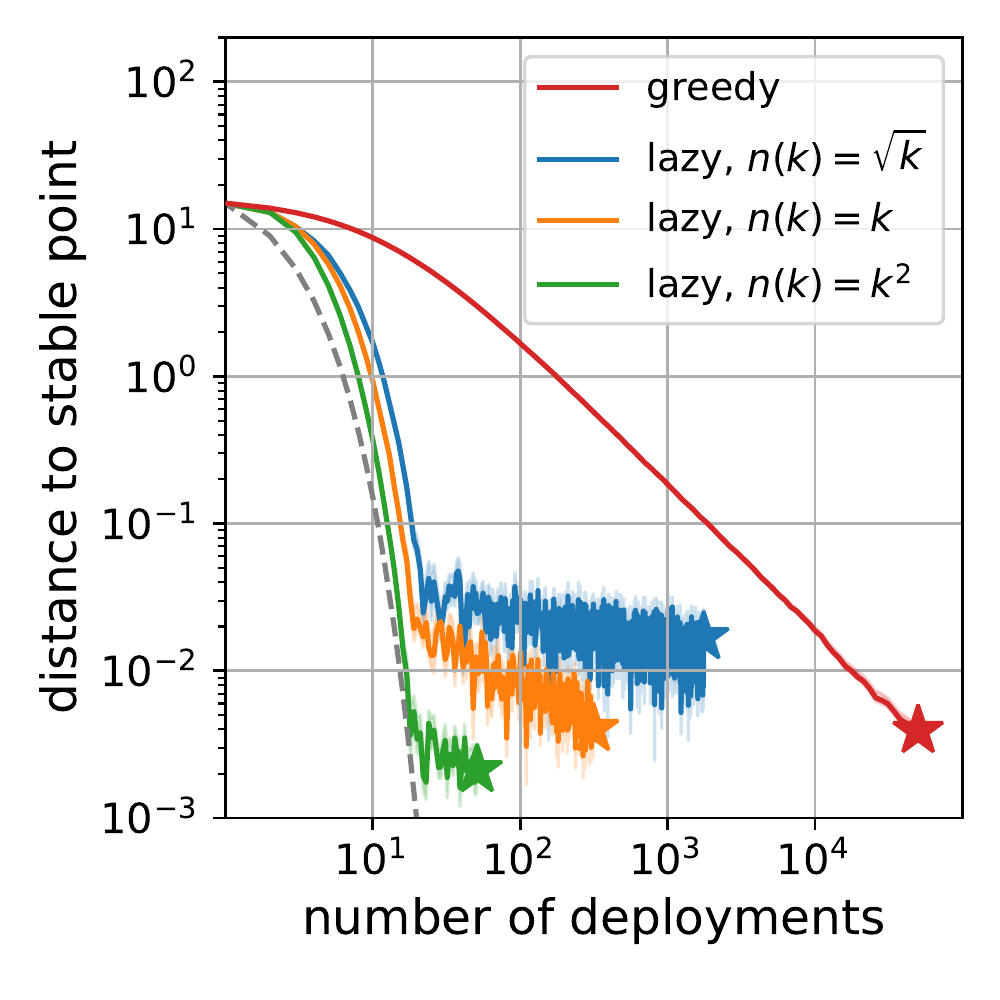}}
\subfigure[$\epsilon=0.9$]{\includegraphics[width = 0.32\textwidth]{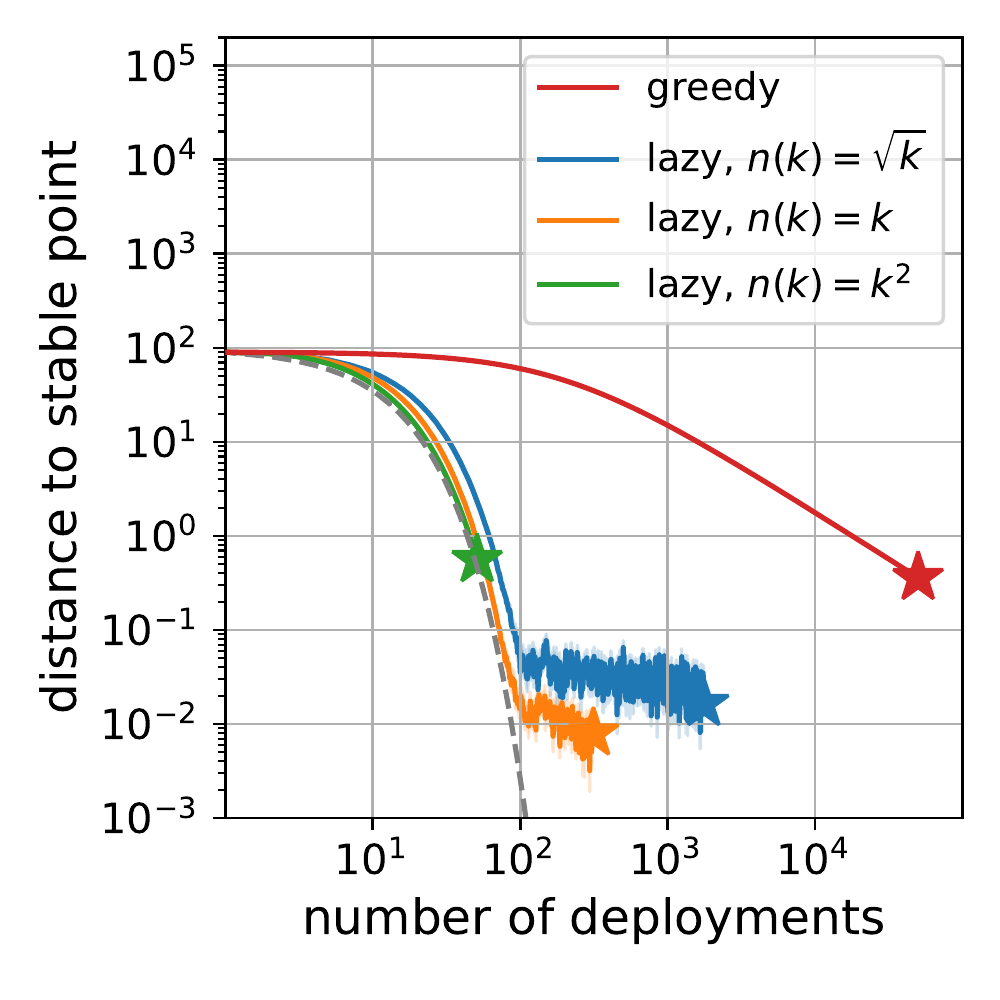}}
\vspace{-0.2cm}
\caption{Convergence to performative stability of lazy deploy for the synthetic Gaussian example with $\mu=10$, $\sigma=0.1$. (top row) We show convergence of lazy deploy as a function of the number of samples collected for various values of $\epsilon$. (bottom row) We plot convergence in the same setting, but now as a function of the number of deployments. For comparison we add greedy deploy (red) and RRM (dashed, gray line). The stars indicate the value attained at the end of our simulation (50k SGD updates).}
\label{fig:gauss}
\end{figure}

\begin{figure}[h!]
\vspace{-10pt}
\centering
\subfigure{\includegraphics[width = 0.32\textwidth]{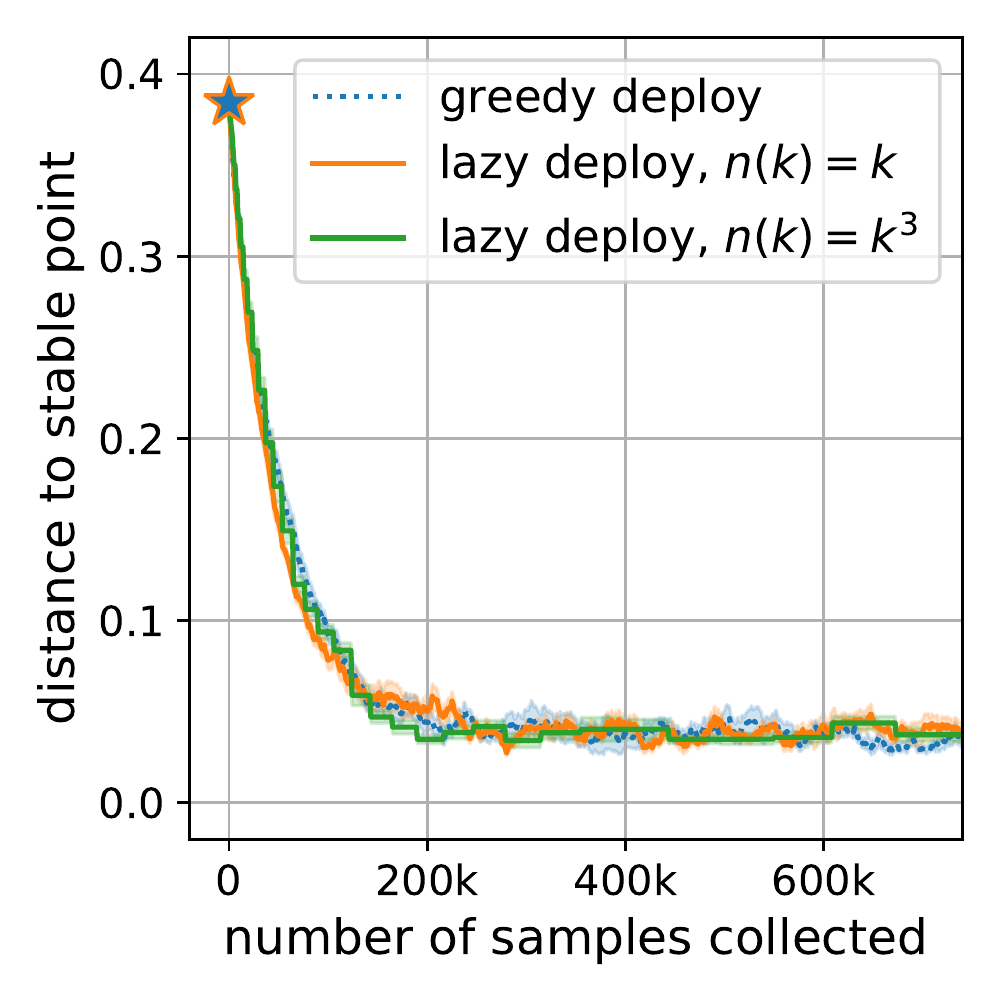}}
\subfigure{\includegraphics[width = 0.32\textwidth]{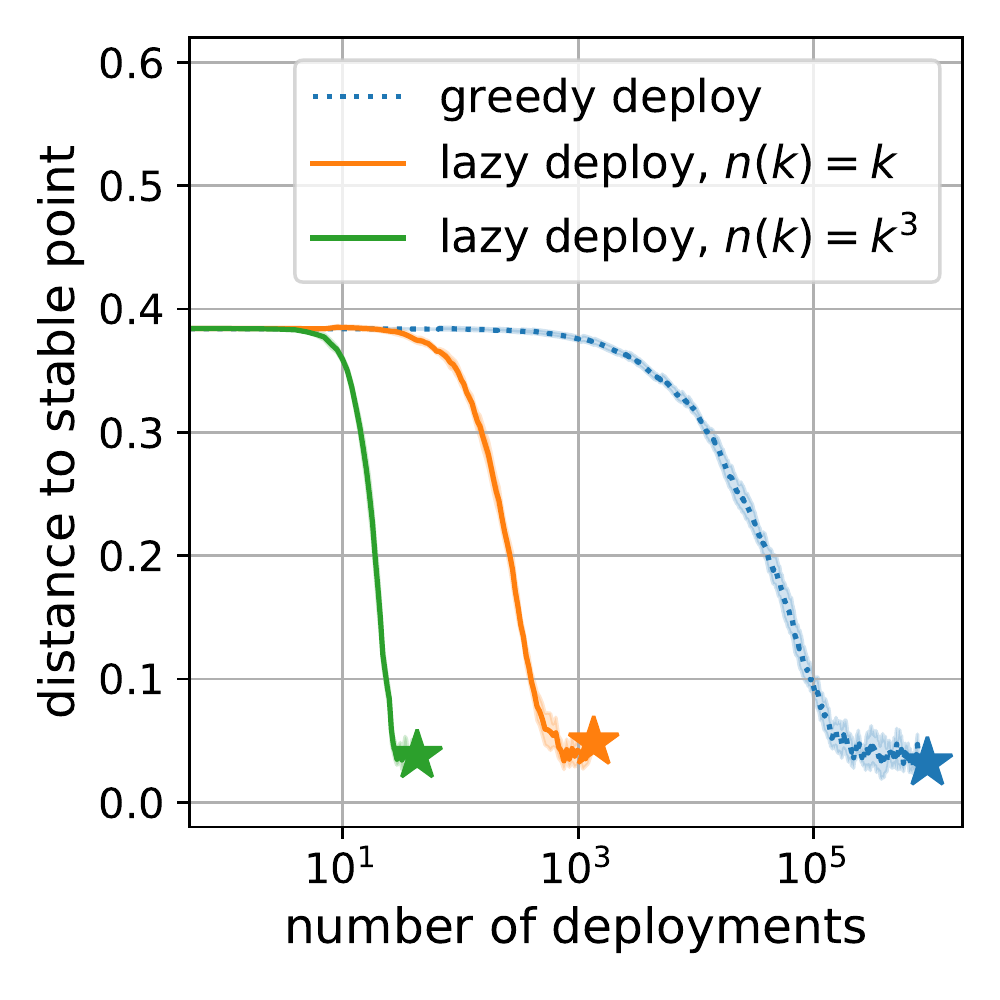}}
\subfigure{\includegraphics[width = 0.32\textwidth]{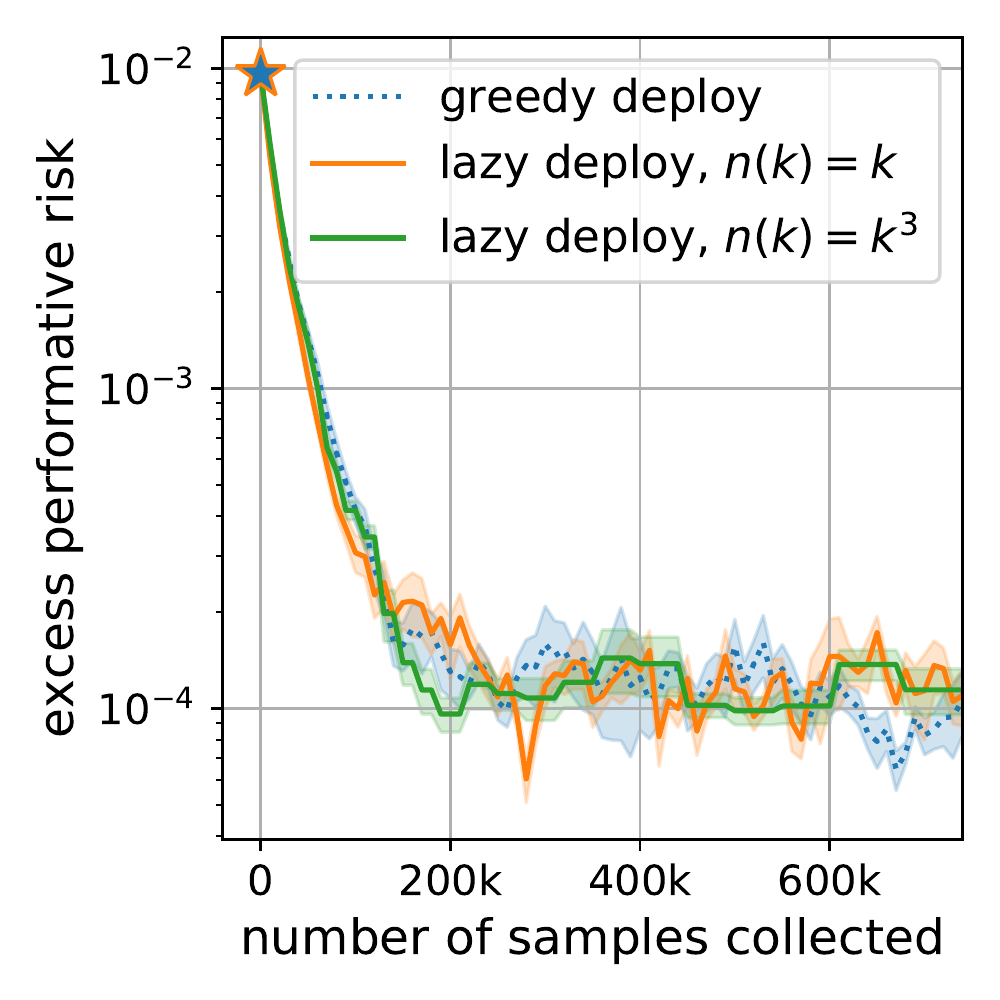}}
\vspace{-0.2cm}
\caption{Convergence of lazy and greedy deploy to performative stability. Results are for the strategic classification experiments with $\epsilon=0.001$. (left panel) convergence as a function of the number of samples collected. (center panel) convergence as a function of the number of deployments. (right panel) excess  performative risk with respect the the stable classifier $\thetaPS$ as a function of stochastic gradient updates.\vspace{-0.5cm}}
\label{fig:creditsmall}
\end{figure}

 \paragraph{Algorithm parameters.}
If not stated otherwise we use the following step size schedules proposed by our theory:
\begin{itemize}
\itemsep=0pt
\item greedy deploy (Theorem \ref{thm:online-main}):  $\eta_k = \frac{c_\eta}{k+k_0}$, where $c_\eta=\frac{1}{\gamma - \epsilon\beta}$ and $k_0 = \frac{8L^2}{(\gamma-\epsilon\beta)^2}$. 
\item lazy deploy (Theorem \ref{thm:offline-main}): $\eta_{k,j} = \frac{c_\eta}{j + k_0}$, where $c_\eta = \frac{1}{\gamma}$ and $k_0 = \frac{8L^2}{\gamma^2}$.
\end{itemize}
In Figure \ref{fig:credit} we experiment with $\epsilon=100$ which is outside the regime of our convergence guarantees. Therefore we adapt the $\epsilon$-dependence of the step size in greedy deploy. In particular, we pick $c_\eta = \frac{100}{\gamma}$ and $k_0 = \frac{8L^2}{\gamma^2}$ for both algorithms. The factor 100 was found empirically to reduce runtime.
The deployment schedule $n(k)$ for lazy deploy is parameterized by $\alpha$ as $n(k) = n_0 k^\alpha$, where we choose $n_0=1$ for our experiments.

\paragraph{Confidence invervals.} We repeat all our experiments 30 times and plot the mean $\mu_s$ and the shaded area  $\mu_s\pm z\frac s {\sqrt{n}}$ where $s$ denotes the standard deviation computed over the runs and $z=1.645$. The value of $z$ is chosen to ensure $90\%$ coverage assuming Gaussian errors in the data.

\subsection{Synthetic Gaussian experiments}

The distribution map for the synthetic example is given by $\D(\theta)= \cN(\mu+\theta\epsilon, \sigma^2)$ where we use $\mu=10$ and $\sigma=0.1$ for our experiments.
The SGD updates take the following form: 
\begin{itemize}
\itemsep=0pt
\item greedy deploy: $\theta_{k+1} = \theta_k + \eta_k (z^{(k)}-\theta_k)$ where $z^{(k)}\sim\D(\theta_{k})$. 
\item lazy deploy:  $\varphi_{k, j+1} = \varphi_{k,j} + \eta_{k,j} (z^{(k)}_j - \varphi_{k,j})$, where $z^{(k)}_j\sim\D(\theta_{k})$. 
\end{itemize}
We initialize all optimization procedures at the risk minimizer $\theta_1 = \mu$ to mitigate effects of bad initialization and to instead focus on the effects of performativity.

\subsection{Strategic classification simulator}

\begin{figure}[t!]
\setlength{\fboxsep}{2mm}
\begin{center}
\begin{boxedminipage}{\textwidth}
\noindent {\bf Input:} base distribution $\D$, classifier $f_\theta$, cost function $c$, and utility function $u$ \\
\noindent {\bf Sampling procedure for $\D(\theta)$:}
\begin{enum}
\item Sample $(x,y) \sim \D$
\item Compute best response $x_{\mathrm{BR}} \leftarrow \argmax_{x'} u(x', \theta) - c(x',x)$
\item Output sample $(x_{\mathrm{BR}}, \;y)$ 
\end{enum}
\end{boxedminipage}
\end{center}
\vspace{-3mm}
\caption{Distribution map for strategic classification (Perdomo et al. \cite{perdomo2020performative}).}
\label{fig:distmap} 
\end{figure}

For these experiments, we use the same experimental setup used by Perdomo et al.~\cite{perdomo2020performative} as implemented in the WhyNot library \cite{whynot}. We  include all the relevant details for the sake of completeness.

The distribution map for this strategic classification example is described in Figure \ref{fig:distmap}. The base distribution $\D$ is a subsampled version of the Kaggle dataset \cite{creditdata} with $d=10$ features and $n=18357$ examples.  Labels are binary variables $y\in\{0,1\}$ and indicate whether an individual defaulted on a loan or not. We preprocess the data and normalize features to have zero mean and unit standard deviation. Out of the ten features, three are treated as strategic features. These are dimensions $1,6,8$ corresponding to features such as the number of open credit lines.

The empirical distribution on these 18k points is considered to be the true distribution. To run our stochastic optimization experiments, we simply sample a single example from the data set according to the data-generating process described in Figure \ref{fig:distmap}.

Individual utilities $u(\theta,x)=-\theta^\top x$ are linear and the costs are quadratic $c(x',x)=\frac 1 {2\epsilon}\|x'-x\|$. Together, these lead to an $\epsilon$-sensitive distribution map as shown in \cite{perdomo2020performative}.

The loss of the institution is a logistic loss with $\ell_2$ regularization:
\begin{equation*}
\frac 1 n \sum_{i=1}^n\left[ \log(1+\exp(x_i^\top \theta))-y_i x_i^\top\theta \right]+ \frac \lambda 2 \|\theta\|^2
\end{equation*}
This loss is $\gamma$-strongly convex and $\beta = \max\big\{2, \frac{1}{4n}\sum_{i=1}^n \norm{x_i}^2 + \gamma\big\}.$ jointly smooth \cite{perdomo2020performative}.
We fix $\lambda=10^3/ n $ for all experiments. When evaluated on the base distribution, the objective has parameters $\beta= 4.72$, $\gamma = 0.054$  which yields $\frac \gamma \beta=0.011$.

\section{Technical lemmas}

\begin{lemma}[Kantorovich-Rubinstein]
\label{lemma:duality} A distribution map $\D(\cdot)$ is $\epsilon$-sensitive if and only if for all $\theta,\theta'\in\Theta$:
\begin{equation*}
\sup \left\{\Big|~ \E_{Z\sim \D(\theta)}g(Z) - \E_{Z\sim \D(\theta')}g(Z) ~\Big| \leq \epsilon\|\theta -\theta'\|_2~:~g:\R^p\rightarrow\R, ~g \text{ 1-Lipschitz}\right\}.
\end{equation*}
\end{lemma}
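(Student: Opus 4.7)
The plan is to reduce the statement to the classical Kantorovich-Rubinstein duality theorem from optimal transport. Recall that this theorem provides the dual representation
\begin{equation*}
W_1(\mu,\nu) \;=\; \sup_{g \text{ 1-Lipschitz}} \left| \int g \, d\mu - \int g \, d\nu \right|,
\end{equation*}
valid for any two Borel probability measures $\mu,\nu$ on $\R^p$ with finite first moments (the absolute value is justified by noting that $-g$ is 1-Lipschitz whenever $g$ is, so passing to absolute values does not change the supremum). I would simply cite this result rather than reprove it, since its proof is standard (either via linear programming duality in the discrete case extended to the general setting by approximation, or via a Hahn-Banach argument; see for instance Villani's optimal transport monograph).

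Given this identity, the equivalence is essentially immediate. Fix an arbitrary pair $\theta,\theta'\in\Theta$ and instantiate the duality with $\mu=\D(\theta)$ and $\nu=\D(\theta')$. Then the inequality $W_1(\D(\theta),\D(\theta'))\le\epsilon\|\theta-\theta'\|_2$ is equivalent to the statement that the supremum of $|\E_{Z\sim\D(\theta)}g(Z)-\E_{Z\sim\D(\theta')}g(Z)|$ over all 1-Lipschitz $g$ is at most $\epsilon\|\theta-\theta'\|_2$, which in turn is equivalent to the assertion that \emph{every} 1-Lipschitz $g$ satisfies this inequality. Quantifying over all $\theta,\theta'\in\Theta$ yields the claimed equivalence between $\epsilon$-sensitivity of $\D(\cdot)$ in the sense of \definitionref{eps} and the Lipschitz-test-function characterization stated in the lemma.

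There is no real obstacle here; the heavy lifting has already been done by the Kantorovich-Rubinstein theorem. The only mildly delicate preliminary is ensuring that $\D(\theta)$ and $\D(\theta')$ have finite first moments so that $W_1$ and the dual expectations are both well-defined, which is implicit under the working assumptions of the paper (e.g., the smoothness-in-$z$ part of \ref{ass:a1} and the second-moment bound \ref{ass:a3} rule out pathological tail behaviour). Under these conditions the equivalence drops out of duality in a single line per direction.
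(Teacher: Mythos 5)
Your proposal is correct and matches the paper's treatment: the paper states this lemma as the classical Kantorovich--Rubinstein duality applied to Definition~\ref{def:eps} and offers no further proof, exactly as you do by citing the standard optimal-transport result and noting the equivalence is immediate. Your additional remark about finite first moments is a reasonable (and harmless) extra precaution that the paper leaves implicit.
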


\begin{lemma}[Lemma C.4 in~\cite{perdomo2020performative}]
\label{lemma:wasserstein_application}
Let $f:\R^n\rightarrow \R^d$ be an $L$-Lipschitz function, and let $X, X'\in\R^n$ be random variables such that $W_1(X,X')\leq C$. Then
$$\|\E[f(X)] -\E[f(X')]\|_2 \leq LC.$$
\end{lemma}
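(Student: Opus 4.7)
The plan is to reduce the vector-valued bound to the scalar Kantorovich--Rubinstein duality (Lemma~\ref{lemma:duality}) via the variational characterization of the Euclidean norm. I would start from
\[
\|\E[f(X)]-\E[f(X')]\|_2 = \sup_{u \in \R^d,\ \|u\|_2 \le 1} \bigl\langle u,\ \E[f(X)]-\E[f(X')]\bigr\rangle,
\]
and for each such unit vector $u$ define the scalar test function $g_u(x) := \langle u, f(x)\rangle / L$ on $\R^n$. A one-line Cauchy--Schwarz computation combined with the $L$-Lipschitzness of $f$ shows that $g_u$ is $1$-Lipschitz.

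By linearity of expectation, $\langle u, \E[f(X)] - \E[f(X')]\rangle = L\,\bigl(\E[g_u(X)] - \E[g_u(X')]\bigr)$. Applying Kantorovich--Rubinstein (Lemma~\ref{lemma:duality}, applied to the laws of $X$ and $X'$) bounds the scalar difference on the right by $W_1(X,X') \le C$, so each inner product is at most $LC$. Taking the supremum over unit $u$ then yields $\|\E[f(X)]-\E[f(X')]\|_2 \le LC$.

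The only minor caveat is that Lemma~\ref{lemma:duality} is phrased for distributions in the image of a map $\D(\cdot)$, whereas here we just have arbitrary random vectors $X,X'$ on $\R^n$; however the underlying KR identity is a classical fact about probability measures with finite first moment and applies verbatim to the laws of $X$ and $X'$. An equally short alternative would be a coupling argument: pick a (near-)optimal coupling $\pi$ realizing $W_1(X,X')$, pull the Euclidean norm inside the expectation via Jensen's inequality, and apply the pointwise Lipschitz bound to $f$. I do not foresee any real obstacle --- the result is essentially a one-line corollary of scalar KR, with the only nontrivial move being the dualization that converts a $\R^d$-valued statement into a family of scalar ones.
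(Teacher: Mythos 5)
Your argument is correct. The paper does not actually prove this lemma; it imports it verbatim as Lemma C.4 of \cite{perdomo2020performative}, so there is no in-paper proof to compare against. Both routes you sketch are standard and sound: the dualization $\|v\|_2 = \sup_{\|u\|_2\le 1}\langle u, v\rangle$ with the $1$-Lipschitz test functions $g_u = \langle u, f(\cdot)\rangle/L$ correctly reduces the vector-valued claim to scalar Kantorovich--Rubinstein, and your caveat about Lemma~\ref{lemma:duality} being phrased for the distribution map is handled properly since the classical duality applies to any pair of laws with finite first moment. The coupling alternative (optimal coupling, then $\|\E_\pi[f(X)-f(X')]\|_2 \le \E_\pi\|f(X)-f(X')\|_2 \le L\,\E_\pi\|X-X'\|_2$) is if anything slightly more self-contained, as it needs only Jensen's inequality for the norm rather than the duality theorem. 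Either version would serve as a complete proof.
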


\begin{lemma}[First-order optimality condition]
\label{lemma:first_order_opt_condition}
	Let $f$ be convex and let $\Omega$ be a closed convex set on which $f$ is differentiable, then 
	\[
	x_* \in \argmin_{x \in \Omega} f(x)
	\]
	if and only if 
	\[
	\nabla f(x_*)^T(y- x_*) \geq 0, \quad \forall y \in \Omega.
	\]
\end{lemma}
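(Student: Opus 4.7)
The plan is to prove the two directions separately, both leveraging the convexity of $\Omega$ and of $f$. For the forward direction $(\Rightarrow)$, I would suppose $x_*$ minimizes $f$ on $\Omega$ and fix an arbitrary $y \in \Omega$. By convexity of $\Omega$, the point $x_t := x_* + t(y - x_*)$ lies in $\Omega$ for every $t \in [0,1]$, so the scalar function $g(t) := f(x_t)$ is well-defined on $[0,1]$. Since $x_*$ is a minimizer, $g(t) \geq g(0)$ for all such $t$, and therefore the right-derivative satisfies $g'(0^+) \geq 0$. Computing this derivative via the chain rule gives $g'(0^+) = \nabla f(x_*)^T (y - x_*)$, which is the desired inequality.

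For the reverse direction $(\Leftarrow)$, I would assume $\nabla f(x_*)^T(y - x_*) \geq 0$ for every $y \in \Omega$ and apply the standard first-order characterization of convex differentiable functions, namely $f(y) \geq f(x_*) + \nabla f(x_*)^T(y - x_*)$. Combining this inequality with the hypothesis immediately yields $f(y) \geq f(x_*)$ for every $y \in \Omega$, so $x_*$ is a minimizer on $\Omega$.

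No step presents a genuine obstacle: this is a textbook consequence of convexity. The only minor technicality is justifying that the one-sided derivative $g'(0^+)$ exists and equals the directional derivative $\nabla f(x_*)^T(y - x_*)$, which is immediate from the hypothesis that $f$ is differentiable on $\Omega$ (so in particular at $x_*$) together with the chain rule.
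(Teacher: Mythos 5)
Your proof is correct, and both directions are the standard textbook argument: restricting $f$ to the segment from $x_*$ to $y$ and taking the one-sided derivative for the forward implication, and invoking the first-order characterization $f(y) \geq f(x_*) + \nabla f(x_*)^T(y - x_*)$ of convex differentiable functions for the converse. The paper itself states this lemma without proof, listing it among its technical lemmas as a standard fact, so there is no authorial proof to compare against; your argument fills that gap soundly and is exactly the one a reader would expect.
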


\begin{lemma}[Theorem 3.5 in~\cite{perdomo2020performative}]
\label{lemma:rrm}
Suppose the loss function is $\gamma$-strongly convex \ref{ass:a2} and $\beta$-jointly smooth \ref{ass:a3}. Then, for all $\theta,\theta'\in\Theta$, it holds that, $$\|G(\theta)-G(\theta')\|_2 \leq \epsilon\frac{\gamma}{\beta}\|\theta - \theta'\|_2.$$
\end{lemma}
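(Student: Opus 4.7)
The plan is to prove the contraction of the repeated risk minimization operator by combining first-order optimality at the two minimizers $G(\theta)$ and $G(\theta')$ with strong convexity and a Wasserstein-based transport of gradients. Let $\theta_G = G(\theta)$ and $\theta_G' = G(\theta')$. First I would apply Lemma~\ref{lemma:first_order_opt_condition} at each minimizer to write
\[
\big(\E_{z\sim\D(\theta)}\nabla\ell(z;\theta_G)\big)^{\!\top}(\theta_G'-\theta_G) \geq 0, \qquad \big(\E_{z\sim\D(\theta')}\nabla\ell(z;\theta_G')\big)^{\!\top}(\theta_G-\theta_G') \geq 0,
\]
and add them to obtain
\[
\big(\E_{z\sim\D(\theta)}\nabla\ell(z;\theta_G) - \E_{z\sim\D(\theta')}\nabla\ell(z;\theta_G')\big)^{\!\top}(\theta_G'-\theta_G) \geq 0.
\]

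Next I would introduce the mixed term $\E_{z\sim\D(\theta')}\nabla\ell(z;\theta_G)$ by adding and subtracting. The term it produces involving only the distribution $\D(\theta')$ can be lower-bounded using $\gamma$-strong convexity of the map $w \mapsto \E_{z\sim\D(\theta')}\ell(z;w)$ (which inherits strong convexity from \ref{ass:a2}): the standard monotonicity inequality for the gradient of a strongly convex function gives
\[
\big(\E_{z\sim\D(\theta')}\nabla\ell(z;\theta_G') - \E_{z\sim\D(\theta')}\nabla\ell(z;\theta_G)\big)^{\!\top}(\theta_G'-\theta_G) \geq \gamma\|\theta_G'-\theta_G\|_2^2.
\]
Substituting this into the previous display and applying Cauchy--Schwarz then yields
\[
\gamma\|\theta_G'-\theta_G\|_2 \;\leq\; \big\|\E_{z\sim\D(\theta)}\nabla\ell(z;\theta_G) - \E_{z\sim\D(\theta')}\nabla\ell(z;\theta_G)\big\|_2,
\]
so the task reduces to controlling how the expectation of $\nabla\ell(\cdot;\theta_G)$ shifts when the sampling distribution changes from $\D(\theta)$ to $\D(\theta')$.

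Finally, the right-hand side is exactly of the form covered by Lemma~\ref{lemma:wasserstein_application}: by joint smoothness \ref{ass:a1}, the map $z\mapsto\nabla\ell(z;\theta_G)$ is $\beta$-Lipschitz in $z$, and by $\epsilon$-sensitivity of the distribution map $\D(\cdot)$ we have $W_1(\D(\theta),\D(\theta'))\leq\epsilon\|\theta-\theta'\|_2$. Applying Lemma~\ref{lemma:wasserstein_application} bounds the gradient discrepancy by $\beta\epsilon\|\theta-\theta'\|_2$, which after dividing by $\gamma$ gives the desired Lipschitz constant for $G$.

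The main subtlety is the bookkeeping in the middle step: one must make sure that the sign of the inner product obtained from adding the two first-order optimality conditions lines up with the direction in which strong convexity is applied, and that the mixed term is inserted on the side whose distribution matches the one used in the strong-convexity step. Once the terms are aligned, the Wasserstein transport of the vector-valued gradient is handled in one line by Lemma~\ref{lemma:wasserstein_application}, so no coordinate-wise Kantorovich--Rubinstein argument is needed.
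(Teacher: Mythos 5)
Your argument is correct, and it is essentially the standard proof of the cited result: the paper itself gives no proof of Lemma~\ref{lemma:rrm}, importing it directly as Theorem~3.5 of \cite{perdomo2020performative}, and your combination of first-order optimality at both minimizers, gradient monotonicity from \ref{ass:a2}, Cauchy--Schwarz, and the Wasserstein transport of the $\beta$-Lipschitz map $z\mapsto\nabla\ell(z;G(\theta))$ via Lemma~\ref{lemma:wasserstein_application} is exactly how that theorem is established there. One point worth flagging: your derivation yields the constant $\epsilon\beta/\gamma$, not the $\epsilon\gamma/\beta$ printed in the lemma statement; the printed constant is a typo (the lemma also miscites joint smoothness as \ref{ass:a3} instead of \ref{ass:a1}), as the paper itself invokes the bound in the form $\|G(\theta)-G(\theta')\|_2\leq\epsilon\tfrac{\beta}{\gamma}\|\theta-\theta'\|_2$ in the proofs of Lemma~\ref{lemma:online-sgd-recursion} and Theorem~\ref{thm:offline-main}, so your version is the right one. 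You should also state explicitly that $\epsilon$-sensitivity of $\D(\cdot)$ is being assumed, since the lemma as restated omits that hypothesis even though it is needed for the Wasserstein step.
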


\begin{lemma}
\label{lemma:order-K}
Let $s\in(0,1)$, and fix $\alpha>0$, then, $$\sum_{k=1}^t k^{-\alpha}s^{t-k} \leq \frac{s^{t(1-2^{-1/\alpha})}}{1-s} + \frac{2t^{-\alpha}}{1-s}.$$
\end{lemma}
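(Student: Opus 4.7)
The bound mixes a polynomial factor $k^{-\alpha}$ with a geometric factor $s^{t-k}$, and the two behave in opposite ways across the range of summation: near $k=1$ the geometric factor is tiny while the polynomial factor is $\mathcal O(1)$, whereas near $k=t$ the geometric factor is $\mathcal O(1)$ but the polynomial factor is $\mathcal O(t^{-\alpha})$. The natural strategy is therefore to split the sum at a threshold $k^\star$ chosen precisely at the balance point, bound each half by its ``big'' factor, and use the geometric-series tail in each case to absorb the other factor.

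Concretely, I plan to set $k^\star = t\cdot 2^{-1/\alpha}$, chosen so that $k \geq k^\star$ implies $k^{-\alpha}\leq 2t^{-\alpha}$. For the head of the sum ($k\leq k^\star$) I bound $k^{-\alpha}\leq 1$ and observe that
\[
\sum_{k=1}^{\lfloor k^\star\rfloor} s^{t-k} \;=\; \sum_{j=t-\lfloor k^\star\rfloor}^{t-1} s^{j} \;\leq\; \frac{s^{t-k^\star}}{1-s}\;=\;\frac{s^{t(1-2^{-1/\alpha})}}{1-s},
\]
which is exactly the first term on the right-hand side of the claim. For the tail ($k>k^\star$) I use the threshold property to pull out $2t^{-\alpha}$, and then bound the remaining geometric sum by its infinite version:
\[
\sum_{k=\lceil k^\star\rceil+1}^{t} k^{-\alpha}s^{t-k} \;\leq\; 2t^{-\alpha}\sum_{k=\lceil k^\star\rceil+1}^{t} s^{t-k} \;\leq\; 2t^{-\alpha}\sum_{j=0}^{\infty}s^{j} \;=\; \frac{2t^{-\alpha}}{1-s}.
\]
Adding the two pieces yields the stated inequality.

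There is really no hard step here; the only thing to be a little careful about is the choice of $k^\star$, and the (harmless) rounding to integer indices, which is handled by the fact that dropping or adding a finite prefix of a geometric sum only tightens the inequality. One can state the argument uniformly in $\alpha>0$ and $s\in(0,1)$ without any case analysis, since $1-2^{-1/\alpha}\in(0,1)$ for every $\alpha>0$, so the exponent $t(1-2^{-1/\alpha})$ is nonnegative and the bound $s^{t-k^\star}\leq 1$ is never needed except trivially.
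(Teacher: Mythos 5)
Your argument is essentially the paper's own proof: the paper splits the sum at $M_t=\max\{m\in\N: m^{-\alpha}>2t^{-\alpha}\}$, which is exactly the integer version of your threshold $k^\star=t\cdot 2^{-1/\alpha}$, and then bounds the head by $k^{-\alpha}\leq 1$ and the tail by $k^{-\alpha}\leq 2t^{-\alpha}$ just as you do. The only nit is that your tail should begin at $\lfloor k^\star\rfloor+1$ rather than $\lceil k^\star\rceil+1$ (otherwise the index $k=\lceil k^\star\rceil$ is dropped when $k^\star\notin\N$); since $\lfloor k^\star\rfloor+1>k^\star$, the bound $k^{-\alpha}\leq 2t^{-\alpha}$ still applies there and the rest goes through unchanged.
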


\begin{proof}
Denote by $a_k\defeq k^{-\alpha}$. Let $M_t = \max\{m\in\N:  a_{m} > 2a_t\}$. We decompose the sum depending on $M_t$ as follows:
$$\sum_{k=1}^t a_ks^{t-k} = \sum_{k=1}^{M_t} a_ks^{t-k} + \sum_{k=M_t + 1}^{t} a_ks^{t-k}.$$
We bound the first term trivially, by applying the fact that $a_k\leq 1$. For the second term, we use the fact that $a_k\leq 2a_t$ for $k> M_t$. We thus get:
$$\sum_{k=1}^t a_ks^{t-k} \leq  \sum_{k=1}^{M_t} s^{t-k} + 2a_t\sum_{k=M_t + 1}^{t} s^{t-k} \leq \frac{ s^{t-M_t}}{1-s} + \frac{2a_t}{1-s}.$$
Since $a_k = k^{-\alpha}$, then $M_t \leq  \frac{t}{2^{1/\alpha}}$, and so
$$\frac{ s^{t-M_t}}{1-s} + \frac{2a_t}{1-s}\leq \frac{ s^{t(1-2^{-1/\alpha})}}{1-s} + \frac{2a_t}{1-s}.$$
\end{proof}

\section{Population-level results: proofs}
\label{app:RGD}

\subsection{Proof of Proposition \ref{prop:counterex}}

Let $\Theta = \R$, and let $z \sim \D(\theta)$ be a point mass at $1+\epsilon \theta$. This distribution map is clearly $\epsilon$-sensitive. Furthermore, define the loss as,
	\[\ell(z;\theta) = - \beta z\theta + \frac{\gamma}{2} \theta^2,\]
where $\beta \geq \gamma$ is an arbitrary positive scalar. Note that this objective is convex in $\theta$ and $\beta$-jointly smooth. Furthermore, it has a unique performatively stable point $\thetaPS = \frac{\beta / \gamma}{1-\epsilon \beta/\gamma}$ whenever $\epsilon \neq \frac{\gamma}{\beta}$; when $\epsilon = \frac{\gamma}{\beta}$, there is no stable point. Repeated gradient descent has the dynamics:
	\begin{align*}
			\theta_{k+1} &= \theta_k - \eta_k \E_{z \sim \D(\theta_k)} \nabla \ell(z; \theta_k)\\
			&= \theta_k - \eta_k (\gamma - \epsilon \beta) \theta_k + \eta_k\beta \\ 
			& = \left(1 - \eta_k\left(\gamma - \epsilon \beta\right)\right) \theta_k + \eta_k\beta.
	\end{align*}
If $\gamma = 0$, then the loss $\ell(z; \theta)$ is convex. Furthermore, for any values of $\epsilon , \beta > 0$ and any positive step size sequence $\{\eta_k\}_{k=1}^\infty$, it holds that $1 + \eta_k \epsilon \beta > 1$ meaning that RGD diverges.

To prove the second part of the statement, if $\gamma > 0$, then the loss is $\gamma$-strongly convex. Furthermore, if $\epsilon > \gamma / \beta$, then for any step size sequence $\{\eta_k\}_{k=1}^\infty$, $1 -\eta_k(\gamma - \epsilon \beta) > 1$ and RGD again diverges. When $\epsilon = \frac{\gamma}{\beta}$, there is no stable solution and hence RGD does not converge to stability.

\subsection{Proof of Proposition \ref{prop:RGD-conv}}

This proof is essentially a consequence of Lemma \ref{lemma:online-sgd-recursion}. By following the steps of Lemma \ref{lemma:online-sgd-recursion}, we get
\begin{align*}
\|\theta_{k+1} -\thetaPS\|_2^2
	&\leq \|\theta_k - \thetaPS\|_2^2 - 2\eta_k (\E\nabla\ell(z^{(k)};\theta_k))^\top (\theta_k - \thetaPS) + \eta^2 \|\E\nabla \ell(z^{(k)}; \theta_k)\|_2^2\\
	&\defeq B_1 -2\eta B_2 + \eta^2B_3.
\end{align*}
Following the same approach as in Lemma \ref{lemma:online-sgd-recursion}, we get 
$$B_2 \geq (\gamma-\epsilon\beta) \|\theta_k - \thetaPS\|_2^2.$$
The bound on $B_3$ is slightly different, as we no longer make assumptions on the second moment of the gradients; we use $z^{(\thetaPS)}$ to denote a sample from $\D(\thetaPS)$ and proceed as follows:
\begin{align*}
\|\E\nabla \ell(z^{(k)};\theta_k)\|_2^2 &= \|\E\nabla \ell(z^{(k)};\theta_k) - \E\nabla \ell(z^{(\thetaPS)};\thetaPS)\|_2^2\\
	&\leq \|\E\nabla \ell(z^{(k)};\theta_k) - \E\nabla \ell(z^{(k)};\thetaPS)+ \E\nabla \ell(z^{(k)};\thetaPS)- \E\nabla \ell(z^{(\thetaPS)};\thetaPS)\|_2^2\\
	&\leq 2\|\E\nabla \ell(z^{(k)};\theta_k) - \E\nabla \ell(z^{(k)};\thetaPS)\|_2^2\\
	&~~~+ 2\|\E\nabla \ell(z^{(k)};\thetaPS)- \E\nabla \ell(z^{(\thetaPS)};\thetaPS)\|_2^2\\
	&\leq 2\beta^2 \|\theta_k - \thetaPS\|_2^2 + 2\beta^2 \epsilon^2 \|\theta_k - \thetaPS\|_2^2\\
	&\leq 2\beta^2\left(1 + \epsilon^2\right)\|\theta_k - \thetaPS\|_2^2,
\end{align*}
where in the third inequality we apply the fact that the loss if $\beta$-jointly smooth, together with Lemma \ref{lemma:wasserstein_application}.
Putting everything together, this implies
$$\|\theta_{k+1}-\thetaPS\|_2^2 \leq  (1 - 2\eta(\gamma-\epsilon\beta) + 2\eta^2\beta^2(1+\epsilon^2))\|\theta_k - \thetaPS\|_2^2.$$
Using the fact that $\sqrt{1-x} \leq 1 -\frac{x}{2}$ for $x\in[0,1]$, we get
$$\|\theta_{k+1}-\thetaPS\|_2 \leq  (1 - \eta(\gamma-\epsilon\beta) + \eta^2\beta^2(1+\epsilon^2))\|\theta_k - \thetaPS\|_2.$$
By setting $\eta=\frac{\gamma-\epsilon\beta}{2(1+\epsilon^2)\beta^2}$, we can conclude
$$\|\theta_{k+1}-\thetaPS\|_2 \leq \left(1 - \frac{(\gamma-\epsilon\beta)^2}{4(1+\epsilon^2)\beta^2}\right)\|\theta_k - \thetaPS\|_2.$$
Note that $\frac{(\gamma-\epsilon\beta)^2}{4(1+\epsilon^2)\beta^2}<1$ because $(\gamma - \epsilon\beta)^2 \leq \gamma^2 + \epsilon^2\beta^2 \leq (1+\epsilon^2)\beta^2$.

We can unroll the above recursion to get
\begin{align*}
\|\theta_{k+1}-\thetaPS\|_2 &\leq \left(1 - \frac{(\gamma-\epsilon\beta)^2}{4(1+\epsilon^2)\beta^2}\right)^k\|\theta_1 - \thetaPS\|_2\\
 &\leq \exp\left(-\frac{k(\gamma-\epsilon\beta)^2}{4(1+\epsilon^2)\beta^2}\right)\|\theta_1 - \thetaPS\|_2.
\end{align*}
Setting the right-hand side to $\delta$ and expressing $k$ completes the proof.

\section{Greedy deploy: proofs}

\label{app:greedy}
\subsection{Proof of Lemma \ref{lemma:online-sgd-recursion}}
\label{app:proof-online-sgd-recursion}

Throughout the proof, we will use $z^{(\thetaPS)}$ to denote a sample from $\D(\thetaPS)$ which is independent from the whole trajectory of greedy deploy (e.g. $\{\theta_j, z^{(j)}\}_j$, etc.).

Since $\Theta$ is closed and convex, we know
$$\|\theta_{k+1} - \thetaPS\|_2^2 = \|\Pi_\Theta(\theta_{k} - \eta_k \nabla \ell(z^{(k)};\theta_k)) - \thetaPS\|_2^2 \leq \|\theta_{k} - \eta_k \nabla \ell(z^{(k)};\theta_k) - \thetaPS\|_2^2.$$
Squaring the right-hand side and expanding out the square,
\begin{align*}
&\bigexp{\|\theta_{k} - \eta_k \nabla \ell(z^{(k)};\theta_k) -\thetaPS\|_2^2}\\
	= ~&\bigexp{\|\theta_k - \thetaPS\|_2^2} - 2\eta_k \bigexp{\nabla\ell(z^{(k)};\theta_k)^\top (\theta_k - \thetaPS)} + \eta_k^2 \bigexp{\|\nabla \ell(z^{(k)}; \theta_k)\|_2^2}\\
	\defeq~& B_1 -2\eta_k B_2 + \eta_k^2B_3.
	\end{align*}

We begin by lower bounding $B_2$. Since $\thetaPS$ is optimal for the distribution it induces, by Lemma \ref{lemma:first_order_opt_condition} we have $\E\left[\nabla \ell(z^{(\thetaPS)};\thetaPS)^\top (\theta_k - \thetaPS)\right]\geq 0$. This allows us to bound $B_{2}$ as:
\begin{align*}
B_{2} &\geq \E\left[(\nabla \ell(z^{(k)};\theta_k) - \nabla \ell(z^{(\thetaPS)};\theta_k) + \nabla \ell(z^{(\thetaPS)};\theta_k) - \nabla \ell(z^{(\thetaPS)};\thetaPS))^\top (\theta_k - \thetaPS)\right]\\
&= \E\left[(\nabla \ell(z^{(k)};\theta_k) - \nabla \ell(z^{(\thetaPS)};\theta_k)^\top (\theta_k - \thetaPS)\right]\\
 &~~~+ \E\left[(\nabla \ell(z^{(\thetaPS)};\theta_k) - \nabla \ell(z^{(\thetaPS)};\thetaPS))^\top (\theta_k - \thetaPS)\right].
\end{align*}
For the first term, we have that 
\begin{align*}
&\E\left[(\nabla \ell(z^{(k)};\theta_k) - \nabla \ell(z^{(\thetaPS)};\theta_k)^\top (\theta_k - \thetaPS)\right]\\
 = ~&\E\big[ \E\left[(\nabla \ell(z^{(k)};\theta_k) - \nabla \ell(z^{(\thetaPS)};\theta_k)^\top (\theta_k - \thetaPS) \mid \theta_k\right] \big] \\ 
\geq~ &-\epsilon \beta \E\left[\norm{\theta_k - \thetaPS}^2\right].
\end{align*}
Having applied the law of iterated expectation, the above inequality follows from the fact that, conditional on $\theta_k$, the function $\nabla \ell(z;\theta_k)^\top (\theta_k - \thetaPS)$ is $\beta \norm{\theta_k - \thetaPS}-$Lipschitz in $z$. To verify this claim, we can apply the Cauchy-Schwarz inequality followed by the fact that the gradient is $\beta$-jointly smooth. Then, we apply Lemma \ref{lemma:duality} and the fact that $\D(\cdot)$ is $\epsilon$-sensitive to get the final bound. 

Now, we use strong convexity to bound the second term,
\begin{align*}
 &\E\left[(\nabla \ell(z^{(\thetaPS)};\theta_k) - \nabla \ell(z^{(\thetaPS)};\thetaPS))^\top (\theta_k - \thetaPS)\right]\\
 =~&\E\big[\E\left[(\nabla \ell(z^{(\thetaPS)};\theta_k) - \nabla \ell(z^{(\thetaPS)};\thetaPS))^\top (\theta_k - \thetaPS)\mid \theta_k \right]\big] \\
 \geq~&\gamma \E\left[\norm{\theta_k - \thetaPS}^2\right].
\end{align*}
Therefore, we get that 
\[
B_{2} \geq (\gamma - \epsilon \beta) \E\left[\norm{\theta_k - \thetaPS}^2\right].
\]

Now we move on to bounding $B_3$. Using our assumption on the variance on the gradients yields the following bound, we get
\begin{align*}
\E\left[\|\nabla \ell(z^{(k)};\theta_k)\|_2^2\right]
	&\leq \sigma^2 + \varlipc^2 \bigexp{\norm{\theta_k - G(\theta_k)}^2} \\ 
	& = \sigma^2 + \varlipc^2 \bigexp{\norm{\theta_k - \thetaPS + \thetaPS - G(\theta_k)}^2} \\
	&\leq \sigma^2 +   \varlipc^2 \left(\bigexp{\left(\|\theta_k - \thetaPS\|_2 + \|\thetaPS - G(\theta_k)\|_2\right)^2}\right) \\ 
	&\leq \sigma^2 +  \varlipc^2\left(1 + \epsilon \frac{\beta}{\gamma}\right)^2\bigexp{\norm{\theta_k - \thetaPS}^2},
\end{align*}
where in the last step we use Lemma \ref{lemma:rrm}, which implies $\norm{\thetaPS - G(\theta_k)} \leq \epsilon \frac{\beta}{\gamma} \norm{\theta_k - \thetaPS}$.

Putting all the steps together completes the proof.

\subsection{Proof of Theorem \ref{thm:online-main}}

From Lemma \ref{lemma:online-sgd-recursion}, we have that the following recursion holds:
 \[
 \E\left[\|\theta_{k+1} - \thetaPS\|_2^2\right] \leq \left(1 - 2\eta_k (\gamma - \epsilon\beta) + \eta_k^2 L^2 \left(1 + \epsilon \frac{\beta}{\gamma}\right)^2\right)\bigexp{\|\theta_k - \thetaPS\|_2^2} + \eta_k^2\sigma^2.
 \]
Using the fact that $\epsilon < \frac{\gamma}{\beta}$, we get that,
\begin{equation*}
\label{eqn:online_SGD_recursion}
\E\left[\|\theta_{k+1} - \thetaPS\|_2^2\right] \leq \left(1 - 2\eta_k(\gamma - \epsilon\beta) +  4\eta_k^2 L^2 \right)\bigexp{\|\theta_k - \thetaPS\|_2^2} +\eta_k^2\sigma^2. 
\end{equation*}
We proceed by using induction. As in the theorem statement, we let $\eta_{k} = \frac{1}{(\gamma-\epsilon\beta)(k + k_0)}$, where we denote $k_0 = \frac{8L^2}{(\gamma-\epsilon\beta)^2}$. The base case, $k=0$, is trivially true by construction of the bound and choice of $k_0$. Now, we adopt the inductive hypothesis that 
\[
\E\left[\|\theta_{k+1} - \thetaPS\|_2^2\right] \leq \frac{\max\left\{2\sigma^2,  8L^2\|\theta_1 - \thetaPS\|_2^2\right\}}{(\gamma-\epsilon\beta)^2(k + k_0)}.
\]
Then, by Lemma \ref{lemma:online-sgd-recursion}, it is true that
\begin{align*}
\E\left[\|\theta_{k+2} - \thetaPS\|_2^2\right] &\leq \left(1 - 2\eta_{k}(\gamma-\epsilon\beta) + 4\eta_{k}^2L^2\right)\E\left[\|\theta_{k+1} - \thetaPS\|_2^2\right] + \eta_{k}^2\sigma^2\\
 &\leq \frac{1}{(\gamma -\epsilon\beta)^2} \left(\frac{k+k_0 - 2 + \frac{4L^2}{(\gamma-\epsilon\beta)^2k_0}}{(k+k_0)^2}\max\left\{2\sigma^2,8L^2 \|\theta_1 - \thetaPS\|_2^2\right\} + \frac{\sigma^2}{(k+k_0)^2}\right)\\
 &\leq \frac{1}{(\gamma -\epsilon\beta)^2} \left(\frac{k+k_0 - 1.5}{(k+k_0)^2}\max\left\{2\sigma^2, 8L^2\|\theta_1 - \thetaPS\|_2^2\right\}  + \frac{\sigma^2}{(k+k_0)^2}\right)\\
 &\leq \frac{1}{(\gamma -\epsilon\beta)^2} \left(\frac{k+k_0 - 1}{(k+k_0)^2}\max\left\{2\sigma^2,  8L^2\|\theta_1 - \thetaPS\|_2^2\right\} - \frac{0.5\cdot 2 \sigma^2 - \sigma^2}{(k+k_0)^2}\right)\\
 &= \frac{1}{(\gamma -\epsilon\beta)^2}\cdot\frac{k+k_0 - 1}{(k+k_0)^2}\max\left\{2\sigma^2,  8L^2\|\theta_1 - \thetaPS\|_2^2\right\}\\
 &\leq \frac{1}{(\gamma -\epsilon\beta)^2}\cdot \frac{1}{k+1+k_0}\max\left\{2\sigma^2, 8L^2\|\theta_1 - \thetaPS\|_2^2\right\},
\end{align*}
where the last step follows because $(k+k_0)^2 > (k+k_0)^2 -1 = (k+k_0 + 1)(k+k_0 - 1)$. Therefore, we have shown $\E\left[\|\theta_{k+2} - \thetaPS\|_2^2\right]\leq \frac{M_{\mathrm{greedy}}}{(\gamma - \epsilon\beta)^2(k+1+k_0)}$, which completes the proof by induction.

\section{Lazy deploy: proofs}
\label{app:lazy}

To prove Theorem \ref{thm:offline-main}, we use the following classical result about convergence of SGD on a static distribution (see, e.g., \cite{rakhlinSGDoptimal}). The step size is chosen such that it matches the step size of Theorem \ref{thm:online-main} when $\epsilon=0$. We include the proof for completeness.

\begin{lemma}
\label{lemma:offline-sgd-local-conv}
Under assumptions \ref{ass:a1}, \ref{ass:a2}, and \ref{ass:a3}, lazy deploy satisfies the following:
 \[
\E\left[\|\varphi_{k,j+1} - G(\theta_k)\|_2^2\right] \leq \left(1 - 2\eta_{k,j} \gamma  + \eta_{k,j}^2 L^2\right)\E\left[\|\varphi_{k,j} - G(\theta_k)\|_2^2\right] + \eta_{k,j}^2 \sigma^2.
\]
 If, additionally, $\eta_{k,j} = \frac{1}{\gamma j + 8L^2/\gamma}$, then for all $k\geq 1, j\geq 0$, the following is true
\begin{equation*}
\E\left[\|\varphi_{k,j+1} - G(\theta_k)\|_2^2\right] \leq \frac{M_{\mathrm{lazy}}}{\gamma^2 j + L^2},
\end{equation*}
	where $M_{\mathrm{lazy}} \defeq \max\left\{1.2\sigma^2, 8L^2\E[\|\theta_k - G(\theta_k)\|_2^2]\right\}$.
\end{lemma}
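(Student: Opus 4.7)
The lemma has two distinct parts -- the per-step recursion and the explicit closed-form bound under a specific step-size schedule -- so I would prove each separately.

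For the recursion, the key structural observation is that during round $k$ the samples $z_j^{(k)}$ are all drawn i.i.d.\ from the \emph{fixed} distribution $\D(\theta_k)$, so $\varphi_{k,j}$ is just running plain (non-performative) SGD against the objective $\theta' \mapsto \E_{z\sim\D(\theta_k)}\ell(z;\theta')$, whose unique minimizer is precisely $G(\theta_k)$. Starting from the update rule and using non-expansivity of the projection onto $\Theta$, I would expand
$$\|\varphi_{k,j+1} - G(\theta_k)\|_2^2 \leq \|\varphi_{k,j} - G(\theta_k)\|_2^2 - 2\eta_{k,j}\nabla\ell(z_j^{(k)};\varphi_{k,j})^\top(\varphi_{k,j} - G(\theta_k)) + \eta_{k,j}^2\|\nabla\ell(z_j^{(k)};\varphi_{k,j})\|_2^2,$$
condition on $\varphi_{k,j}$, and integrate over the fresh sample $z_j^{(k)}$. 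The cross term I would handle exactly as in Lemma \ref{lemma:online-sgd-recursion}: add and subtract $\nabla\ell(z;G(\theta_k))$, kill the added piece via first-order optimality of $G(\theta_k)$ (Lemma \ref{lemma:first_order_opt_condition}), and apply pointwise $\gamma$-strong convexity from \ref{ass:a2} to obtain a lower bound of $\gamma\|\varphi_{k,j} - G(\theta_k)\|_2^2$. For the squared-gradient term, \ref{ass:a3} applied with reference point $\theta = \theta_k$ gives $\sigma^2 + L^2\|\varphi_{k,j} - G(\theta_k)\|_2^2$ directly, since A3 is stated with the sampling distribution $\D(\theta)$ paired with its own minimizer $G(\theta)$. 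Taking total expectation yields the stated recursion; note that, unlike in Lemma \ref{lemma:online-sgd-recursion}, no appeal to $\epsilon$-sensitivity or Kantorovich--Rubinstein duality is needed, which is exactly why the contraction factor is $1-2\eta_{k,j}\gamma+\eta_{k,j}^2 L^2$ rather than the weaker factor involving $\gamma - \epsilon\beta$.

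For the closed-form bound I would argue by induction on $j$, following the template of Theorem \ref{thm:online-main} with $\gamma - \epsilon\beta$ replaced by $\gamma$. Rewriting the step size as $\eta_{k,j} = 1/(\gamma(j+j_0))$ with $j_0 = 8L^2/\gamma^2$, the base case $j=0$ is immediate since $\varphi_{k,1} = \theta_k$ and $M_{\mathrm{lazy}}/L^2 \geq 8\E[\|\theta_k - G(\theta_k)\|_2^2]$ by definition. For the inductive step, plugging the hypothesis at $j$ into the recursion at $j+1$ with $\eta_{k,j+1} = \gamma/(\gamma^2(j+1) + 8L^2)$ reduces matters to verifying an algebraic inequality of the form
$$\Bigl(1 - 2\eta_{k,j+1}\gamma + \eta_{k,j+1}^2 L^2\Bigr)\cdot \frac{M_{\mathrm{lazy}}}{\gamma^2 j + L^2} + \eta_{k,j+1}^2\sigma^2 \;\leq\; \frac{M_{\mathrm{lazy}}}{\gamma^2(j+1)+L^2}.$$
The constant $8$ in the step-size offset is what makes the deterministic contraction term fit, while the constant $1.2$ in $M_{\mathrm{lazy}}$ is exactly what is required to absorb the $\eta_{k,j+1}^2\sigma^2$ noise term into the target.

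I expect the only real obstacle to be the algebraic bookkeeping of this inductive step: the two constants ($1.2$ in $M_{\mathrm{lazy}}$ and $8$ in $j_0$) have to be chosen so that both the deterministic contraction and the stochastic noise simultaneously fit into $M_{\mathrm{lazy}}/(\gamma^2(j+1)+L^2)$ at every step, and verifying this requires carefully expanding $(1 - 2\eta_{k,j+1}\gamma + \eta_{k,j+1}^2 L^2)/(\gamma^2 j + L^2)$ and comparing denominators of the form $(\gamma^2(j+1)+8L^2)^2$ against $\gamma^2(j+1)+L^2$. Beyond this, the argument is entirely standard SGD analysis for smooth strongly convex objectives; all the conceptual content -- the elimination of performativity from the recursion -- lives in Part 1.
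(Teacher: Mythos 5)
Your proposal is correct and follows essentially the same route as the paper: the recursion is obtained via non-expansivity of the projection, first-order optimality of $G(\theta_k)$ plus strong convexity for the cross term, and \ref{ass:a3} for the second-moment term (with no appeal to $\epsilon$-sensitivity, exactly as you note), and the closed-form bound is then established by the same induction on $j$ used in Theorem~\ref{thm:online-main} with $\gamma-\epsilon\beta$ replaced by $\gamma$. The algebraic bookkeeping you flag is indeed the only remaining work, and it goes through as you describe.
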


\begin{proof}
\label{app:proof-offline-sgd-local-conv}

First we prove the recursion. Since $\Theta$ is closed and convex, we know
\begin{align*}
	&\E\left[\|\varphi_{k,j+1} - G(\theta_k)\|_2^2\right]\\
	 =~& \E\left[ \left\|\Pi_\Theta\left(\varphi_{k,j} - \eta_{k,j} \nabla \ell(z^{(k)}_j; \varphi_{k,j})\right) - G(\theta_k)\right\|_2^2 \right]\\
	\leq~&\E\big[ \left\|\varphi_{k,j} - \eta_{k,j} \nabla \ell(z^{(k)}_j; \varphi_{k,j}) - G(\theta_k)\right\|_2^2 \big]\\
	=~&\E \big[ \left\|\varphi_{k,j} - G(\theta_k)\right\|_2^2 \big] -2 \eta_{k,j} \E\big[\nabla \ell(z^{(k)}_j;\varphi_{k,j})^\top(\varphi_{k,j}- G(\theta_k))\big] + \eta_{k,j}^2 \E\big[\norm{\nabla \ell(z^{(k)}_j; \varphi_{k,j})}^2\big].
\end{align*}
Next, we examine the cross-term. By the first-order optimality conditions for convex functions (Lemma \ref{lemma:first_order_opt_condition}), we know that $\E\left[\nabla \ell(z^{(k)}_j; G(\theta_k))^\top(\varphi_{k,j} - G(\theta_k))\right] \geq 0$. Using this lemma along with strong convexity, we can lower bound this term as follows,
\begin{align*}
	\E\big[\nabla \ell(z^{(k)}_j;\varphi_{k,j})^\top(\varphi_{k,j}- G(\theta_k))\big] &\geq \E\big[(\nabla \ell(z^{(k)}_j; \varphi_{k,j}) - \nabla \ell(z^{(k)}_j; G(\theta_k))^\top(\varphi_{k,j} - G(\theta_k)) \big] \\ 
	&\geq \gamma \E \big[ \norm{\varphi_{k,j} - G(\theta_k)}^2 \big].
\end{align*} 
For the final term, we use our assumption on the second moment of the gradients,
\[
\E\left[\|\nabla \ell(z^{(k)}_j;\varphi_{k,j})\|_2^2\right] \leq \sigma^2 + L^2 \E\left[\|\varphi_{k,j} - G(\theta_k)\|_2^2\right].
\]
Putting everything together, we get the desired recursion,
\[
\E\left[\|\varphi_{k,j+1} - G(\theta_k)\|_2^2\right] \leq (1 - 2\eta_{k,j} \gamma  + \eta_{k,j}^2L^2)\E\left[\|\varphi_{k,j} - G(\theta_k)\|_2^2\right] + \eta_{k,j}^2 \sigma^2.
\]
Now we turn to proving the second part of the lemma. Similarly to Theorem \ref{thm:online-main}, we prove the result using induction. As in the theorem statement, we let $\eta_{k,j} = \frac{1}{\gamma (j + k_0)}$, where we denote $k_0 = \frac{8L^2}{\gamma^2}$. The base case, $j=0$, is trivially true by construction of the bound and choice of $k_0$. Now, we adopt the inductive hypothesis that 
\[
\E\left[\|\varphi_{k,j+1} - G(\theta_k)\|_2^2\right] \leq \frac{\max\left\{1.2\sigma^2, 8L^2\E\left[\|\theta_k - G(\theta_k)\|_2^2\right]\right\}}{\gamma^2(j + k_0)}.
\]
Then, by part (a) of this lemma, it is true that
\begin{align*}
\E\left[\|\varphi_{k,j+2} - G(\theta_k)\|_2^2\right] &\leq \left(1 - 2\eta_{k,j}\gamma + \eta_{k,j}^2L^2\right)\E\left[\|\varphi_{k,j+1} - G(\theta_k)\|_2^2\right] + \eta_{k,j}^2\sigma^2\\
 &\leq \frac{1}{\gamma^2}\left( \frac{j+k_0 - 2 + \frac{L^2}{\gamma^2k_0}}{(j+k_0)^2}\max\left\{1.2\sigma^2, 8L^2\E\left[\|\theta_k - G(\theta_k)\|_2^2\right]\right\} + \frac{\sigma^2}{(j+k_0)^2}\right)\\
 &\leq \frac{1}{\gamma^2}\left(\frac{j+k_0 - 15/8}{(j+k_0)^2}\max\left\{1.2\sigma^2, 8L^2\E\left[\|\theta_k - G(\theta_k)\|_2^2\right]\right\}  + \frac{\sigma^2}{(j+k_0)^2}\right)\\
 &\leq \frac{1}{\gamma^2}\left(\frac{j+k_0 - 1}{(j+k_0)^2}\max\left\{1.2\sigma^2, 8L^2\E\left[\|\theta_k - G(\theta_k)\|_2^2\right]\right\} - \frac{7/8\cdot 1.2 \sigma^2 + \sigma^2}{(j+k_0)^2} \right)\\
 &= \frac{1}{\gamma^2}\cdot\frac{j+k_0 - 1}{(j+k_0)^2}\max\left\{1.2\sigma^2,  8L^2\E\left[\|\theta_k - G(\theta_k)\|_2^2\right]\right\}\\
 &\leq \frac{1}{\gamma^2}\cdot\frac{1}{j+1 + k_0}\max\left\{1.2\sigma^2, 8L^2\E\left[\|\theta_k - G(\theta_k)\|_2^2\right]\right\},
\end{align*}
where the last step follows because $(j+k_0)^2 > (j+k_0)^2 -1 = (j+k_0 + 1)(j+k_0 - 1)$. Therefore, we have shown $\E\left[\|\varphi_{k,j+2} - G(\theta_k)\|_2^2\right]\leq \frac{M_{\mathrm{lazy}}}{\gamma^2(j+1 + k_0)}$, which completes the proof by induction.
\end{proof}


\subsection{Proof of Theorem \ref{thm:offline-main}}

First we state two deterministic identities used in the proof, which follow from Lemma \ref{lemma:rrm}:
\begin{align}
\label{eqn:iden1}
\|G(\theta)-\thetaPS\|_2 &\leq \epsilon\frac{\beta}{\gamma}\|\theta - \thetaPS\|_2,\\
\label{eqn:iden2}
\|\theta - G(\theta)\|_2 &\leq 	\|\theta - \thetaPS\|_2 + \|\thetaPS - G(\theta)\|_2 \leq \left(1 + \epsilon \frac{\gamma}{\beta}\right) \|\theta-\thetaPS\|_2.
\end{align}
Note that identity (\ref{eqn:iden2}) implies $\|\theta - G(\theta)\|_2 < 2\|\theta-\thetaPS\|_2$ if $\epsilon < \frac{\gamma}{\beta}$.

By triangle inequality, we have
\begin{align}
\label{eqn:expanded-square}
&\E\left[\|\theta_{k+1}-\thetaPS\|_2^2\right]\nonumber\\
=~&\E\left[\|\theta_{k+1}- G(\theta_k) + G(\theta_k) - \thetaPS\|_2^2\right]\nonumber\\
\leq~&\E\left[\|\theta_{k+1}- G(\theta_k)\|_2^2\right] + 2\E\left[\|\theta_{k+1}- G(\theta_k)\|_2\| G(\theta_k) - \thetaPS\|_2\right] + \E\left[\|G(\theta_k) - \thetaPS\|_2^2\right].
\end{align}
Denoting $k_0 = \frac{8L^2}{\gamma^2}$, Lemma \ref{lemma:offline-sgd-local-conv} bounds the first term by
\begin{align*}
\E\left[\|\theta_{k+1}- G(\theta_k)\|_2^2\right] &= \E\left[\E\left[\|\theta_{k+1}- G(\theta_k)\|_2^2~|~\theta_k\right]\right]\\
 &\leq \frac{1.2\sigma^2 + 8L^2\E\left[\|\theta_k - G(\theta_k)\|_2^2\right]}{\gamma^2(n(k)+k_0)}\\
  &\leq  \frac{1.2\sigma^2 + 32L^2\E\left[\|\theta_k - \thetaPS\|_2^2\right]}{\gamma^2(n(k)+k_0)},
\end{align*}
where in the last step we apply identity (\ref{eqn:iden2}).
Note also that by Jensen's inequality, we know
$$\E\left[\|\theta_{k+1}- G(\theta_k)\|_2\right] \leq \frac{1.1\sigma + 6L\E\left[\|\theta_k - G(\theta_k)\|_2\right]}{\gamma\sqrt{n(k)+k_0}}.$$
We can use this inequality, together with identities (\ref{eqn:iden1}) and (\ref{eqn:iden2}), to bound the cross-term in equation (\ref{eqn:expanded-square}) as follows:
\begin{align*}
&2\E\left[\|\theta_{k+1}- G(\theta_k)\|_2\| G(\theta_k) - \thetaPS\|_2\right]\\
\leq~&2\epsilon\frac{\beta}{\gamma}\E\left[\|\theta_{k+1}- G(\theta_k)\|_2\| \theta_k - \thetaPS\|_2\right]\\
\leq~&\frac{2\epsilon\frac{\beta}{\gamma}}{\sqrt{n(k)+k_0}}\E\left[\left(\frac{6L}{\gamma}\|\theta_{k}- G(\theta_k)\|_2 + \frac{1.1\sigma}{\gamma}\right)\| \theta_k - \thetaPS\|_2\right]\\
\leq~&\frac{2\epsilon\frac{\beta}{\gamma}}{\sqrt{n(k)+k_0}}\E\left[\left(\frac{6L}{\gamma}\left(1 + \epsilon\frac{\beta}{\gamma}\right)\|\theta_{k}- \thetaPS\|_2 + \frac{1.1\sigma}{\gamma}\right)\| \theta_k - \thetaPS\|_2\right]\\
\leq~&\frac{24\epsilon\beta L}{\gamma^2\sqrt{n(k)+k_0}}\E\left[\|\theta_{k}- \thetaPS\|_2^2\right] + \frac{2.2\sigma\epsilon\beta}{\gamma^2\sqrt{n(k)+k_0}}\E\left[\| \theta_k - \thetaPS\|_2\right].
\end{align*}
We bound the latter term by applying the AM-GM inequality; in particular, for all $\alpha_0\in(0,1)$, it holds that
$$\frac{2.2\sigma\epsilon\beta}{\gamma^2\sqrt{n(k)+k_0}}\E\left[\| \theta_k - \thetaPS\|_2\right] \leq \frac{1.1\sigma\epsilon\beta}{\gamma^2}\left(\frac{1}{(n(k)+k_0)^{\alpha_0}} + \frac{\E\left[\|\theta_k - \thetaPS\|_2^2\right]}{(n(k)+k_0)^{1-\alpha_0}}\right).$$
Thus, the final bound on the cross-term in equation (\ref{eqn:expanded-square}) is
\begin{align*}
2\E\left[\|\theta_{k+1}- G(\theta_k)\|_2\| G(\theta_k) - \thetaPS\|_2\right]&\leq \left(\frac{24\epsilon\beta L}{\gamma^2\sqrt{n(k)+k_0}}+  \frac{1.1\sigma\epsilon\beta}{\gamma^2(n(k)+k_0)^{1-\alpha_0}}\right)\E\left[\|\theta_k - \thetaPS\|_2^2\right]\\
 &+ \frac{1.1\sigma\epsilon\beta}{\gamma^2(n(k)+k_0)^{\alpha_0}}.
\end{align*}
The final term in equation (\ref{eqn:expanded-square}) can be bounded by identity (\ref{eqn:iden1}):
$$\E\left[\|G(\theta_k) - \thetaPS\|_2^2\right] \leq \left(\epsilon\frac{\beta}{\gamma}\right)^2 \E\left[\|\theta_k - \thetaPS\|_2^2\right].$$

Putting all the steps together, we have derived the following recursion, true for all $\alpha_0\in(0,1)$:
\small
\begin{align}
\label{eqn:lazy-recursion}
\E\left[\|\theta_{k+1}-\thetaPS\|_2^2\right] &\leq \left(\frac{32 L^2}{\gamma^2(n(k)+k_0)} + \frac{24\epsilon\beta L}{\gamma^2\sqrt{n(k)+k_0}} +  \frac{1.1\sigma\epsilon\beta}{\gamma^2(n(k)+k_0)^{1-\alpha_0}} + \left(\epsilon\frac{\beta}{\gamma}\right)^2\right)	\E\left[\|\theta_k - \thetaPS\|_2^2\right]\nonumber\\
 &+ \frac{1.2\sigma^2}{\gamma^2(n(k)+k_0)} + \frac{1.1\sigma\epsilon\beta}{\gamma^2(n(k)+k_0)^{\alpha_0}}\nonumber\\
 &\leq c \E\left[\|\theta_k - \thetaPS\|_2^2\right] + \frac{1.2\sigma^2}{\gamma^2(n(k)+k_0)} + \frac{1.1\sigma\epsilon\beta}{\gamma^2(n(k)+k_0)^{\alpha_0}},
\end{align}
\normalsize
where we define 
\begin{equation*}
c\defeq \frac{32 L^2}{\gamma^2 n_0} + \frac{24\epsilon\beta L}{\gamma^2\sqrt{n_0}} +  \frac{1.1\sigma\epsilon\beta}{\gamma^2n_0^{1-\alpha_0}} + \left(\epsilon\frac{\beta}{\gamma}\right)^2.\label{eq:c}
\end{equation*} 
We pick $n_0$ large enough such that there exists $\alpha_0>0$ for which $c<1$.

Unrolling the recursion given by equation (\ref{eqn:lazy-recursion}), we get
\begin{align*}
    \E\left[\|\theta_{k+1} - \thetaPS\|_2^2\right]\leq c^k \|\theta_1 - \thetaPS\|_2^2 + \frac{1}{\gamma^2} \sum_{j=1}^k c^{k-j} \left(\frac{1.2\sigma^2}{n(j) + k_0} + \frac{1.1\sigma\epsilon\beta}{(n(j) + k_0)^{\alpha_0}}\right).
\end{align*}

Since $\alpha_0<1$, we can upper bound the second term as
\begin{align*}
&\frac{1}{\gamma^2} \sum_{j=1}^k c^{k-j} \left(\frac{1.2\sigma^2}{n(j) + k_0} + \frac{1.1\sigma\epsilon\beta}{(n(j) + k_0)^{\alpha_0}}\right)\\
 \leq~ &\frac{1.2\sigma^2}{\gamma^2} \sum_{j=1}^k c^{k-j}  \frac{1}{n(j) + k_0} + \frac{1.1\sigma\epsilon\beta}{\gamma^2} \sum_{j=1}^k c^{k-j}  \frac{1}{(n(j) + k_0)^{\alpha_0}}\\
   \leq~ &\frac{1}{\gamma^2(1-c)}\left(\frac{1.2\sigma^2 }{n_0}(2k^{-\alpha} + c^{(1-2^{-1/\alpha})k}) + \frac{1.1\sigma\epsilon\beta}{n_0^{\alpha_0}} (2k^{-\alpha\cdot\alpha_0} + c^{(1-2^{-1/(\alpha\alpha_0)})k})\right)
\end{align*}
where in the second inequality we apply Lemma \ref{lemma:order-K} after plugging in the choice of $n(k)$. Using the fact that $\alpha_0\in(0,1)$ and hence $c^{(1-2^{-1/(\alpha\alpha_0)})k}<c^{(1-2^{-1/\alpha})k}$, as well as $\epsilon < \frac{\gamma}{\beta}$ and $n_0 \geq 1$, gives
\begin{align*}
&\frac{1}{\gamma^2(1-c)}\left(\frac{1.2\sigma^2 }{n_0}(2k^{-\alpha} + c^{(1-2^{-1/\alpha})k}) + \frac{1.1\sigma\epsilon\beta}{n_0^{\alpha_0}} (2k^{-\alpha\cdot\alpha_0} + c^{(1-2^{-1/(\alpha\alpha_0)})k})\right)\\
 \leq~&\frac{1.2\sigma^2 + 1.1\sigma\gamma}{\gamma^2(1-c)}\left(4 k^{-\alpha\alpha_0} + 2c^{(1-2^{-1/\alpha})k}\right)\\
 \leq~& \frac{3(\sigma + \gamma)^2}{\gamma^2(1-c)}\left(2k^{-\alpha\alpha_0} + c^{\Omega(k)}\right).
\end{align*}
It remains to set $\alpha_0$; we set $\alpha_0 = \max\{\delta\in (0,1): c <1\}$ (note that the existence of such $\alpha_0$ is guaranteed by the choice of $n_0$). Clearly, $\alpha_0\rightarrow 1$ as $n_0$ grows, and so putting everything together gives
$$ \E\left[\|\theta_{k+1} - \thetaPS\|_2^2\right]\leq c^k \|\theta_1 - \thetaPS\|_2^2 + \frac{3(\sigma+\gamma)^2}{\gamma^2(1-c)} \left(\frac{2}{k^{\alpha\cdot(1-o(1))}} + c^{\Omega(k)}\right),$$
as desired.

\section{Proof of Corollary \ref{corollary:comparison}}

From Theorem \ref{thm:online-main}, we know that for greedy deploy, $\bigexp{\norm{\theta_{k+1} - \thetaPS}^2} = \mathcal{O}(\frac{1}{k})$ where $k$ indexes both the number of classifiers and the number of samples collected. By inverting this bound, we see that to ensure $\bigexp{\norm{\theta_{k+1} - \thetaPS}^2} \leq \delta$, it suffices to collect $\mathcal{O}(\frac{1}{\delta})$ samples.

From our analogous convergence result for lazy deploy (Theorem \ref{thm:offline-main}), we know that after the $k$-th deployment, it holds that $\bigexp{\norm{\theta_{k+1} - \thetaPS}^2} = \mathcal{O}(1/ k^{\alpha\cdot \omega })$, for some $\omega = 1-o(1)$ which is independent of $k$ and tends to 1 as $n_0$ grows. If we collect $\Theta(j^\alpha)$ samples for each deployment $j = 1\dots k$, after $k$ deployments the total number of samples $N$ is $\Theta(k^{\alpha + 1})$. Therefore,  
\begin{equation*}
\bigexp{\norm{\theta_{k+1} - \thetaPS}^2} = \mathcal{O}(1 \; / \; N^{\frac{\alpha\cdot \omega}{\alpha + 1}}).
\end{equation*}
By inverting these bounds, we get our desired result for the asymptotics of lazy deploy.

\end{document}